\newtheoremstyle{theoremdd}
	{\topsep}
	{\topsep}
	{\itshape}
	{0pt}
	{\bfseries}
	{. }
	{ }
	{\thmname{#1}\thmnumber{ #2}\thmnote{ (#3)}}
\theoremstyle{theoremdd}
\newcommand{\expect}{\operatorname{E}\expectarg}
\DeclarePairedDelimiterX{\expectarg}[1]{[}{]}{%
	\ifnum\currentgrouptype=16 \else\begingroup\fi
	\activatebar#1
	\ifnum\currentgrouptype=16 \else\endgroup\fi
}
\newcommand{\innermid}{\nonscript\;\delimsize\vert\nonscript\;}
\newcommand{\activatebar}{%
	\begingroup\lccode`\~=`\|
	\lowercase{\endgroup\let~}\innermid 
	\mathcode`|=\string"8000
}
\newcommand{\bs}{\boldsymbol}
\DeclareMathOperator*{\argmin}{arg\,min}
\DeclareMathOperator*{\argmax}{arg\,max}
\newcommand{\mr}{\mathrm}
\newcommand{\typwt}{\tau_{y,{p}_{\bs{w}}}(t)}
\newtheorem{assumption}{Assumption}
\newtheorem{lemma}{Lemma}
\newtheorem{remark}{Remark}
\newtheorem{theorem}{Theorem}
\begin{document}
	
    \title{Exploiting Relevance for Online Decision-Making in High-Dimensions}

	\author{Eralp Tur\u{g}ay,
		Cem Bulucu, 
		and Cem Tekin,~\IEEEmembership{Senior Member,~IEEE}
		\thanks{\textcopyright 2020 IEEE. Personal use of this material is permitted. Permission from IEEE must be obtained for all other uses, in any current or future media, including reprinting/republishing this material for advertising or promotional purposes, creating new collective works, for resale or redistribution to servers or lists, or reuse of any copyrighted component of this work in other works}
		\thanks{This work was supported in part by the Scientific and Technological Research Council of Turkey (TUBITAK) under Grants 116E229 and 215E342. \textit{(Eralp Tur\u{g}ay and Cem Bulucu contributed equally to this work.)}}
		\thanks{The authors are with the Department of Electrical and Electronics Engineering, Bilkent University, Ankara, Turkey, 06800 (e-mail: turgay@ee.bilkent.edu.tr; bulucu@ee.bilkent.edu.tr;  cemtekin@ee.bilkent.edu.tr).} 
		}

	\maketitle

\begin{abstract}
Many sequential decision-making tasks require choosing at each decision step the right action out of the vast set of possibilities by extracting actionable intelligence from high-dimensional data streams. Most of the times, the high-dimensionality of actions and data makes learning of the optimal actions by traditional learning methods impracticable. In this work, we investigate how to discover and leverage sparsity in actions and data to enable fast learning. As our learning model, we consider a structured contextual multi-armed bandit (CMAB) with high-dimensional arm (action) and context (data) sets, where the rewards depend only on a few relevant dimensions of the joint context-arm set, possibly in a non-linear way. 
We depart from the prior work by assuming a high-dimensional, continuum set of arms, and allow relevant context dimensions to vary for each arm. We propose a new online learning algorithm called {\em CMAB with Relevance Learning} (CMAB-RL) and prove that its time-averaged regret asymptotically goes to zero when the expected reward varies smoothly in contexts and arms. CMAB-RL enjoys a substantially improved regret bound compared to classical CMAB algorithms whose regrets depend on the number of dimensions $d_x$ and $d_a$ of the context and arm sets. Importantly, we show that when the learner has prior knowledge on sparsity, given in terms of upper bounds $\overline{d}_x$ and $\overline{d}_a$ on the number of relevant context and arm dimensions, then CMAB-RL achieves $\tilde{O}(T^{1 - 1 /(2 + 2\overline{d}_x  + \overline{d}_a)})$ regret. 
Finally, we illustrate how CMAB algorithms can be used for optimal personalized blood glucose control in type 1 diabetes mellitus patients, and show that CMAB-RL outperforms other contextual MAB algorithms in this task, where the contexts represent multimodal physiological data streams obtained from sensor readings and the arms represent bolus insulin doses that are appropriate for injection. 
\end{abstract}

\begin{IEEEkeywords}
    Online learning, contextual multi-armed bandit, regret bounds, dimensionality reduction, personalized medicine.
\end{IEEEkeywords}

%
\IEEEpeerreviewmaketitle

\section{Introduction}

\IEEEPARstart{A}{I}-enabled technologies are becoming ubiquitous for many applications that involve repeated decision-making under uncertainty. Delivering personalized medicine for treatment of complex diseases \cite{yoon2016discovery}, discovering and recommending interesting articles for a particular user from huge corpora of documents \cite{huang2018rating,tekin2018multi} and optimizing hyper-parameters of deep learning architectures given a particular dataset \cite{li2018hyperband} all require context-driven learning of optimal decisions over huge action sets. 
As the dimensionality of the contexts and the actions grow, learning the optimal decision for each context becomes a formidable task since what has been learned in the past cannot be used to accurately estimate the action rewards for the current context.
Nevertheless, in many high-dimensional settings, only a subset of context and action dimensions affect the reward. For instance, in controlling the blood glucose of type 1 diabetes mellitus (T1DM) patients, data analysis highlights that future blood glucose of a patient only depends on blood glucose before the treatment, dose of the treatment and carbohydrate intake, whilst the affect of other physiological and environmental variables on blood glucose are found to be negligible \cite{zhu18deept1dm, midroni18xgboostt1dm}. Similarly, when training deep neural networks, it is observed that in general not only a small subset of hyperparameters can be considered relevant, but also the content of relevant subset of hyperparameters differs from one task to another \cite{bergstra2012random}.

In this paper, we model online decision-making in high-dimensions as a multi-armed bandit (MAB) \cite{lai1,auer2002finite}. MABs have successfully modeled a wide set of applications that involve sequential decision-making under uncertainty ranging from dynamic spectrum sharing \cite{gai2010learning,liu2010distributed,cohen2014restless} to medical diagnosis \cite{tekin2016confidence}. Specifically, we formalize the problem as a contextual MAB (CMAB) \cite{slivkins2014contextual}, where the learner observes a $d_x$-dimensional context from a context set ${\cal X}$ at the beginning of each round before selecting a $d_a$-dimensional action (arm) from an arm set ${\cal A}$.\footnote{In general, ${\cal X}$ and ${\cal A}$ have uncountably many elements.} This generalizes the MAB model and allows the arms' reward distributions depend on the context. The goal of the learner in this setting is to compete with an oracle that selects at each round the arm with the highest expected reward for the current context. The cumulative loss of the learner with respect to this oracle is called {\em the regret}, thereby minimizing the regret is equivalent to maximizing the cumulative expected reward. The learner's time-averaged expected reward will approach to that of the oracle as long as it can keep its regret sublinearly growing over time. Being able to capture intricacies of data-driven decision-making, CMAB algorithms have been successfully used in recommender systems \cite{li2010contextual}, personalized medicine \cite{deng2014budgeted} and cognitive communications \cite{asadi2018fml}.

Since the cardinalities of ${\cal X}$ and ${\cal A}$ are very large, further assumptions on the problem structure are required to obtain sublinear in time regret. In this paper, we consider a variant of CMAB with similarity information \cite{slivkins2014contextual}, where the reward from a context-arm pair comes from a fixed distribution, expected rewards vary smoothly in contexts and arms, and no stochastic assumptions are made on how contexts arrive over time.\footnote{Analysis holds for any fixed sequence of contexts.} In this setting, dimensionality of the context and arm sets play a key role on the performance of learning algorithms \cite{lu2010contextual}. In the worst-case, the regret has exponential dependence on $d_x$ and $d_a$, and thus, grows almost linearly in time in high-dimensional problems.

This motivates us to develop a new CMAB model and algorithm that address the learning challenges arising from high-dimensional context and arm sets. As discussed in the preceding paragraphs, in many applications of the CMAB, although the contexts and arms are high-dimensional, the most relevant information is embedded into a small number of relevant dimensions. Therefore, we consider a CMAB problem with similarity information where the expected reward only depends on relevant subcomponents of the arms and contexts. While the relevant subcomponent of the arms is fixed, the relevant subcomponent of the contexts can be different for each arm. For instance, in personalized treatment assignment, each arm can represent a drug cocktail and each component of an arm may correspond to the dose of a particular drug. Then, the relevance information tells that the outcome of the treatment only depends on a subset of relevant drugs in the cocktail and a subset of contexts of the patient (e.g., physiological data, genomic data) that are relevant to the drug cocktail. Minimizing the regret in this problem is extremely challenging since the learner knows neither the reward distributions nor what is relevant beforehand. All of these need to be learned online by only using the observed contexts, the selected arms and the random rewards observed from the selected arms in the past. 

In this paper, we solve the problem described above by only assuming that the learner knows upper bounds $\overline{d}_x$ and $\overline{d}_a$ on the number of relevant context and arm dimensions. Essentially, we propose a new algorithm called CMAB with Relevance Learning (CMAB-RL) that learns the relevant context and arm dimensions to achieve $\tilde{O}(T^{1 - 1 /(2 + 2\overline{d}_x  + \overline{d}_a)})$ regret, while on the other hand, CMAB algorithms that do not learn the relevance achieve $\tilde{O}(T^{1 - 1 /(2 + {d}_x  + {d}_a)})$ regret in the worst-case \cite{lu2010contextual}. This implies that CMAB-RL has a better regret bound than these algorithms in terms of its dependence on time as long as $2\overline{d}_x < d_x$ is satisfied, and significantly improves over the prior work for sparse MAB problems, where $\overline{d}_x << d_x$ and/or $\overline{d}_a << d_a$.

The most closely related work to ours is \cite{tekin2015releaf}, which considers a CMAB problem with finite number of arms, where the relevant context dimensions may vary from arm to arm. Provided with the same upper bound on the number of relevant context dimensions, the algorithm RELEAF in \cite{tekin2015releaf} is shown to achieve $\tilde{O}(T^{g(\bar{d}_x)})$ regret, where $g(\bar{d}_x) = (2+2\bar{d}_x + \sqrt{4{\bar{d}_x}^2 + 16\bar{d}_x + 12}) / (4+2\bar{d}_x + \sqrt{4{\bar{d}_x}^2 + 16\bar{d}_x + 12})$. However, the setting in \cite{tekin2015releaf} is quite different from ours, since the authors assume that reward feedback is costly, and thus, needs to be acquired only when there is a need to explore. Therefore, their algorithm achieves a worse regret bound than CMAB-RL (the regret of CMAB-RL for this setting is $\tilde{O}(T^{(1+2\bar{d}_x)/(2+2\bar{d}_x )})$), because it needs to rely on control functions to either perform exploration or exploitation in each round, while CMAB-RL does not explicitly separate these two. Moreover, our formulation allows us to deal with high-dimensional and continuum sets of arms, which can be used in representing action sets for drug dosage, online auctions \cite{blum2004online}, routing \cite{bansal2003online}, web-based recommendations \cite{song2014online} and web page content optimization \cite{hill2017efficient}.

In the core of CMAB-RL reside two new methods to identify and exploit relevance. The first one generates a collection of partitions of the context and arm sets formed by low-dimensional subsets of context and arm dimensions. This allows CMAB-RL to estimate rewards of context-arm pairs for only certain subsets of context and arm dimensions, thereby mitigating estimation errors caused by sparsity of similar samples that emerge from high-dimensionality. The second one identifies for each arm the candidate relevant tuples of context dimensions by comparing the variation of the sample mean rewards with confidence intervals constructed using selection statistics of related context-arm pairs. After identifying the candidate relevant tuples, CMAB-RL chooses the tuple with the minimum variation for each arm. Then, it uses the selected tuples to form reward estimates, and uses the principle of optimism in the face of uncertainty to minimize its regret. 

Apart from the regret bounds, we also show the superiority of CMAB-RL as compared to other learning methods via extensive simulations on synthetic and real-world datasets. We model optimal personalized blood glucose control problem in T1DM patients for the first time (to the best of our knowledge) as a CMAB problem, where the contexts represent multimodal physiological data streams obtained from sensor readings and the arms represent bolus insulin doses that are appropriate for injection, and show that blood glucose control can be significantly improved by using our method.

In a nutshell, our main contribution is to design an online learning algorithm that can maximize the cumulative expected reward (minimize the regret) in sequential decision-making problems that involve high-dimensional and large context and arm sets with a sparse structure, where the expected reward is a (possibly) non-linear function of contexts and arms. While doing so, we do not make any assumptions on how contexts arrive over time as stochastic models may fail to accurately capture real-world phenomena that generate the contexts. Nevertheless, we show that time-averaged regret can be made arbitrarily small by utilizing the prior knowledge which states that similar contexts and actions should yield similar expected rewards.

The rest of the paper is organized as follows. Related work is given in Section \ref{relatedWork}. CMAB and the regret are described in Section \ref{problemFormulation}. CMAB-RL is introduced in Section \ref{algorithm} and its regret is analyzed in Section \ref{regretAnalysis}. The effectiveness of learning the relevant dimensions is shown via simulations over (i) a high-dimensional synthetic dataset and (ii) a model created from real-world data collected from T1DM patients in Section \ref{simulations}. Concluding remarks are provided in Section \ref{conclusion} and appendices, including tables of notation and auxiliary results, are given in the supplemental document.

\section{Related Work} \label{relatedWork}

Research relevant to our work can be categorized along two dimensions: related work in CMAB and related work in relevance learning and dimension reduction.

\begin{table*}[t]
	\caption{Comparison of our work with the related works.}
	\label{table:related}
	\centering
	\begin{tabular}{l p{2.8cm} p{1.8cm}  p{2.5cm}  p{8cm}  }
		\toprule
		MAB algorithm & Regret bound &  Relevance learning & High-dimensional arm set  \\
		\midrule
		Contextual Zooming \cite{slivkins2014contextual}   & $ \tilde{O}(T^{1-1/(2+d_z)})$  & No & Yes      \\
		Query-Ad-Clustering \cite{lu2010contextual}             & $ \tilde{O}(T^{1-1/(2+d_c)})$ & No & Yes     \\
		RELEAF \cite{tekin2015releaf}      & $\tilde{O}(T^{g(\bar{d}_x)})$ & Yes & No    \\
		CMAB-RL (our work)  & $\tilde{O}(T^{1 - 1 /(2 + 2\overline{d}_x  + \overline{d}_a)})$ & Yes & Yes \\
		\bottomrule
	\end{tabular}
\end{table*}

\subsection{Related Work in CMAB}

CMAB has been studied under various assumptions on the relation between context-arm pairs and rewards. In the context of our work, prior art in CMAB can be categorized into three groups.

Problems in the first category (including our model) usually assume that there is an unknown but fixed reward distribution for every context-arm pair and the expected reward is a Lipschitz continuous function of the distance between context-arm pairs. Generally, for this category, no stochastic assumptions are made on the context arrivals. Under these assumptions, \cite{lu2010contextual} proposes an algorithm that achieves $O(T^{1-1/(2+d_c)+\epsilon})$ regret for any $\epsilon > 0$ where $d_c$ is the covering dimension of the similarity space, i.e., the space of feasible context-arm pairs. 
The proposed algorithm partitions the similarity space and uses the past history in each set of the partition to form reward estimates of context-arm pairs within that particular set. It is also shown that a lower bound of order $\Omega(T^{1-1/(2+d_p) -\epsilon})$ exists where $d_p$ is the packing dimension of the similarity space. Another related work \cite{slivkins2014contextual} proposes an algorithm that adaptively divides the similarity space with the help of a covering oracle, essentially by ``zooming" into regions where the context arrivals concentrate and arms provide high rewards, in order to perform high-precision exploration in these areas. It is shown that this algorithm achieves $\tilde{O}(T^{1-1/(2+d_z)})$ regret where $d_z$ is the zooming dimension, which is linked to the covering dimension of the set of near-optimal context-arm pairs. The same problem is considered in \cite{shekhar2018gaussian} with a Gaussian process prior on the reward, and a CMAB algorithm that constructs a tree of partitions inspired by the HOO strategy in \cite{bubeck2011x} is shown to achieve an optimal regret bound. To the best of our knowledge, the only other paper that considers relevance learning in this category is \cite{tekin2015releaf}. As noted in the introduction section, different from \cite{tekin2015releaf}, we consider a high-dimensional arm set and provide improved regret bounds by constructing a novel method to test the relevance.

The second category works under the linearly realizability assumption. Here, contexts represent arm features and the expected reward of an arm is a linear function of its context. \cite{li2010contextual} proposes LinUCB algorithm for personalized news article recommendation, and \cite{chu2011contextual} proves that a variant of LinUCB achieves $\tilde{O}(\sqrt{Td})$ regret, where $d$ is the dimension of the context. \cite{valko2013finite} extends these algorithms by introducing kernel functions, and shows that the proposed algorithm achieves $\tilde{O}(\sqrt{T\tilde{d}})$ regret, where $\tilde{d}$ represents the effective dimension of the kernel feature space. Notably, \cite{abbasi2011improved} provides an improved regret analysis for this problem by constructing more refined confidence sets. 
Sparsity in the context of linear CMAB is considered in \cite{wang2018minimax} and \cite{bastani2020online}. In these works, sparsity corresponds to having arm weight vectors with many zero elements, as dimensions with zero weights have no effect on the expected reward. Similar to our setting, these works also assume prior knowledge on sparsity in terms of an upper bound on the number of relevant dimensions. Unlike sparse linear CMAB, we consider sparsity in a much more general environment, where the reward is allowed to be a non-linear function of arms and contexts. We only impose a mild Lipschitz continuity assumption (Assumption \ref{simA}) on the expected reward, which allows our framework to be applicable to a much broader set of problems. We would also like to note that any linear bandit also satisfies the Lipschitz continuity assumption. Therefore, it can be said that \cite{wang2018minimax} and \cite{bastani2020online} assume a much stronger prior knowledge on the form of the expected reward than our work.

The third category assumes that at each round the context and the arm rewards in that round are jointly drawn from a time-invariant distribution and the goal is to compete with the best policy in a given policy class. Among many works that fall into this category, \cite{langford2007epoch} proposes the Epoch-Greedy algorithm that achieves $O(T^{2/3})$ regret. Follow-up works  such as \cite{agarwal2014taming} and \cite{dudik2011efficient} propose improved algorithms with  $\tilde{O}(T^{1/2})$ regret. 

Apart from these, \cite{feraud2016random} considers that each element of the context comes from a binary distribution and proposes the Bandit Forest algorithm. This algorithm chooses relevant contexts and eliminates the irrelevant ones by using conditional probabilities. However, it considers only finitely many arms and contexts. Learning the optimal policy from a logged dataset with bandit feedback is considered in \cite{atan2018constructing}. There, the authors identify the relevant context dimensions from logged data by constructing a relevance test that uses the importance sampling method. However, their method can only detect whether a context dimension is individually relevant or not. 

In addition to these, \cite{tyagi2016two} and \cite{djolonga2013high} investigate non-contextual MAB with high-dimensional arms. Like our work, \cite{tyagi2016two} assumes that only a subset of the arm dimensions are relevant and proposes a smart discretization of the arm set to achieve regret whose time order only depends on the number of relevant arm dimensions. On the other hand, \cite{djolonga2013high} assumes that the expected reward is low-dimensional and smooth, and proposes an explore-then-exploit strategy that performs subspace identification followed by Bayesian optimization to minimize the regret. Methods in these works cannot be directly applied in our setting since we also need to take into account exogenously arriving contexts. 

Table \ref{table:related} lists the assumptions and regret bounds of the works that are most closely related to ours.

\subsection{Related Work in Relevance Learning and Dimension Reduction}

Related work in relevance learning (or feature selection) mainly consists of offline methods. Similar to the related work in CMAB, offline feature selection can be categorized into three: Filter, wrapper and embedded approaches. In the embedded approach, feature selection is a part of the training procedure of a classifier. Wrapper methods select features based on the classifier's feedback. In contrast, filter methods do not take classifier feedback into account, and select features based on intrinsic and statistical properties of the features such as correlations and marginal distributions. A plethora of papers exist for each approach. 
For the embedded approach, decision trees \cite{deng2012feature} and lasso based methods \cite{tibshirani1996regression} are commonly used.
As an example of the wrapper methods, Recursive Feature Elimination proposed in \cite{guyon2002gene} iteratively trains the classifier, computes the ranking for each feature and removes the feature with smallest rank to find an optimal subset of the feature set.  Examples of filter methods include feature weighting \cite{kira1992practical} and information-theoretic feature selection algorithms \cite{peng2005feature}.

Online methods in feature selection can be seen as adaptations of offline methods. Due to computational efficiency, filter methods are generally preferred in the online framework  \cite{zhou2005streaming}. For instance, \cite{wu2010online} proposes a method called Online Streaming Feature Selection (OSFS). This algorithm divides the feature set into three disjoint sets: strongly relevant, weakly relevant and irrelevant. OSFS works in two phases. In the first phase, it learns strongly and weakly relevant features and eliminates irrelevant features. In the second phase, features that are relevant but redundant due to correlations with the other features are eliminated. While there is an abundance of literature in online feature selection (see e.g., \cite{yu2014towards} and references therein), they do not fit into the CMAB setting where the goal is to learn the relevant features in order to minimize the regret. Moreover, these works try to identify a fixed set of relevant features, while in our case the set of relevant context dimensions may differ among arms.

\section{Problem Formulation} \label{problemFormulation}
The system operates in rounds indexed by $t \in \{1,2,\ldots \}$. At the beginning of each round, the learner observes a context $x(t)$ that comes from a ${d}_x$-dimensional context set ${\cal X} \coloneqq [0,1]^{{d}_x}$, and then, chooses an arm $a(t)$ from a ${d}_a$-dimensional arm set ${\cal A} \coloneqq [0,1]^{{d}_a}$. The set of feasible context-arm pairs is denoted by ${\cal F} \coloneqq {\cal X} \times {\cal A}$. The random reward obtained from playing arm $a(t)$ in round $t$ is given as $r(t) \coloneqq \mu_{a(t)}(x(t)) + \kappa(t) $, where $\mu_{a}(x)$ denotes the expected reward of a context-arm pair  $ (x, a) \in {\cal F} $ and $\kappa(t)$ is the noise process whose marginal distribution is conditionally 1-sub-Gaussian, i.e. $\forall \lambda \in \mathbb{R}$
\begin{align*}
\expect{e^{\lambda \kappa (t)} | a_{1:t}, x_{1:t},\kappa_{1:t-1}} \leq \text{exp}(\lambda^2/2)
\end{align*}
where for $b \in\{a,x,\kappa\}$, $b_{1:t} \coloneqq (b(1),\ldots b(t))$. 

Let ${\cal D}_a \coloneqq \{1, \ldots, d_a\}$ denote the set of arm dimensions. 
For any $\bs{z} \subseteq {\cal D}_a$, ${\cal A}_{\bs{z}} \coloneqq [0,1]^{|\bs{z}|}$ denotes the subset of ${\cal A}$ that contains the values of arm dimensions in $\bs{z}$ and for any $a \in {\cal A}$, $a_{\bs{z}} \in {\cal A}_{\bs{z}}$ denotes the $|\bs{z}|$-tuple subarm whose elements are elements of $a$ that correspond to the arm dimensions in $\bs{z}$.
For any $\bs{z} \subseteq {\cal D}_a$ and $\bs{z}' = {\cal D}_a \setminus \bs{z}$, we write $ a = \{a_{\bs{z}}, a_{ \bs{z}'} \}$. 
Let $\bs{c}$ denote the subset of ${\cal D}_a$ that contains the relevant arm dimensions, i.e. $\forall \bs{z} \subseteq {\cal D}_{a} \setminus \bs{c}$, $\forall a_{\bs{z}}, a'_{\bs{z}} \in {\cal A}_{\bs{z}}$, $\forall a_{{\cal D}_a \setminus \bs{z}} \in {\cal A}_{{\cal D}_a \setminus \bs{z}} $ and $\forall x \in {\cal X}$, we have $\mu_{ \{a_{\bs{z}},  a_{{\cal D}_a \setminus \bs{z}}  \}}(x) = \mu_{ \{a'_{\bs{z}},  a_{{\cal D}_a \setminus \bs{z}} \} }(x)$.

Similarly, let ${\cal D}_x \coloneqq \{1, \ldots, d_x\}$ denote the set of context dimensions.
For any $\bs{z} \subseteq {\cal D}_x$, ${\cal X}_{\bs{z}} \coloneqq [0,1]^{|\bs{z}|}$ denotes the subset of ${\cal X}$ that contains values of the context dimensions in $\bs{z}$ and for any $x \in {\cal X}$, $x_{\bs{z}} \in {\cal X}_{\bs{z}}$ denotes the $|\bs{z}|$-tuple subcontext whose elements are elements of $x$ that correspond to the context dimensions in $\bs{z}$.
For any $\bs{z} \subseteq {\cal D}_x$ and $\bs{z}' = {\cal D}_x \setminus \bs{z}$, we write $x = \{x_{\bs{z}}, x_{ \bs{z}'} \}$.
Since relevant context dimensions may be different for different arms, for any $a \in {\cal A}$,  let $\bs{c}_a$ denote the subset of ${\cal D}_x$ that contains the relevant context dimensions, i.e. $\forall a \in {\cal A}$, $\forall \bs{z} \subseteq {\cal D}_{x} \setminus \bs{c}_a$, $\forall x_{\bs{z}}, x'_{\bs{z}} \in {\cal X}_{\bs{z}}$ and $\forall x_{{\cal D}_x \setminus \bs{z}} \in {\cal X}_{{\cal D}_x \setminus \bs{z}} $, we have $\mu_{a}(\{ x_{\bs{z}},  x_{{\cal D}_x \setminus \bs{z}} \}) = \mu_{a}(\{ x'_{\bs{z}},  x_{{\cal D}_x \setminus \bs{z}} \})$.

For a given context $x$, the optimal arm is defined as $a^*(x) \coloneqq \argmax_{a \in {\cal A}} \mu_a(x)$.
Since there are infinitely many arms and contexts, it is impossible to learn the optimal arm for each context without any further assumptions on the expected rewards. To overcome this issue, the following assumption provides a similarity structure on the expected rewards with respect to the set of context-arm pairs, which is a modified version of the Lipschitz continuity assumption commonly used in the contextual MAB literature \cite{slivkins2014contextual}.
It states that the variation of the expected reward between two context-arm pairs is bounded by the distance between the context-arm pairs in the relevant dimensions.

\begin{assumption} \label{simA}
$\exists L>0$ such that $\forall a, a' \in {\cal A}$ and $x,x' \in {\cal X}$, we have
\begin{align*}
|\mu_a(x) - \mu_{a'}(x')| \leq L (\lVert x_{\bs{c}_a} - x'_{\bs{c}_a}\rVert + \lVert a_{\bs{c}} - a'_{\bs{c}}\rVert)
\end{align*}
where $\lVert . \rVert$ represents the Euclidean norm. 
\end{assumption}

Assumption \ref{simA} also implies that
\begin{align*}
|\mu_a(x) - \mu_{a'}(x')| \leq L (\lVert x_{\bs{c}_{a'}} - x'_{\bs{c}_{a'}}\rVert + \lVert a_{\bs{c}} - a'_{\bs{c}}\rVert).
\end{align*}

We assume that the learner knows $L$ given in Assumption $\ref{simA}$, but does not know $\mu_a(x)$, $a \in {\cal A}$, $x \in {\cal X}$. To evaluate the performance of the learner given an arbitrary sequence of contexts $x_{1:T}$, we adopt the commonly used (pseudo) regret notion, given as
\begin{align*}
\text{Reg}(T) \coloneqq \sum_{t=1}^{T} \mu_{a^*(x(t))}(x(t))- \sum_{t=1}^{T} \mu_{a(t)}(x(t)) .
\end{align*}

Note that $\text{Reg}(T)$ is a random variable since $a(t)$ itself depends on the learning algorithm and its observations. In essence, $\text{Reg}(T)$ compares the expected reward accumulated by the learner with that of the oracle. Our goal is to design a learning algorithm to minimize the regret. Algorithms that do not take relevant dimensions into account (see, e.g. \cite{lu2010contextual}) will achieve $\tilde{O}(T^{1-1/(2+d_x+d_a)})$ regret in the worst-case. On the other hand, our algorithm CMAB-RL achieves $\tilde{O}(T^{1-1/(2+2\overline{d}_x + \overline{d}_a)})$ regret where $\overline{d}_x$ and $\overline{d}_a$ are known upper bounds on the number of relevant context and arm dimensions: $\underline{d}_x := \max_{a \in {\cal A}} |\bs{c}_a| \leq \overline{d}_x$ and $\underline{d}_a = |\bs{c}| \leq \overline{d}_a$. This shows that when $2\overline{d}_x + \overline{d}_a < d_x + d_a$, CMAB-RL achieves better regret compared to the algorithms that do not exploit the relevance structure. Thus, in the rest of the paper, we assume that $2\overline{d}_x \leq d_x$. Note that we do not require existence of a unique low-dimensional subspace of ${\cal F}$ that captures all the relevance, since it is possible that $\cup_{a \in {\cal A}} \bs{c}_a = {\cal D}_x$.

\section{The Learning Algorithm} \label{algorithm}
Our algorithm, called CMAB with Relevance Learning (CMAB-RL), is described in Algorithms \ref{algorithm:CMAB-RL} and \ref{algorithm:generate}. CMAB-RL is a CMAB algorithm that optimizes itself by generating supersets of the relevant context and arm dimensions with sizes $2\overline{d}_x$ and $\overline{d}_a$. The main step in learning relevance is to form a set of candidate dimensions (tuples) that contains the relevant dimensions with a high probability. Past observations that fall into these tuples are then used to estimate expected rewards of the arms, which results in highly accurate estimates when the tuples that contain the relevant dimensions are correctly identified. 

\begin{algorithm}[ht!]
	
	\caption{CMAB-RL}\label{algorithm:CMAB-RL}
	
	\begin{algorithmic}[1]
		
		\STATE Input: ${\cal X}, {\cal A}, T,  L, \overline{d}_x, \overline{d}_a$, $m$
		
		\STATE Initialization:(${\cal C}({\cal X}),{\cal Y}) = \text{Generate}({\cal X}, {\cal A}, \overline{d}_x, \overline{d}_a, m)$ \\
		Set $\hat{\mu}_{y,p_{\bs{w}}}(0) = 0 $, $N_{y,p_{\bs{w}}}(0) = 0$ for all $y \in {\cal Y}$, $\bs{w} \in {\cal V}^{2 \overline{d}_x}_{x}$, $p_{\bs{w}} \in {\cal P}_{\bs{w}}$
		
		\WHILE{$1 \leq t \leq T$}
		
		\STATE Observe $x(t)$ and for each $\bs{w} \in  {\cal V}^{2 \overline{d}_x}_{x}$, find $p_{\bs{w}}(t) \in {\cal P}_{\bs{w}}$ that $x(t)$ belongs to
		
		\STATE Compute ${\cal R}_y(t)$ for all $y \in {\cal Y}$ as given in \eqref{eqn:relset}
		
		\FOR {$y \in {\cal Y}$}
		
		\IF{${\cal R}_y(t) = \emptyset$}
		
		\STATE Randomly select $\hat{\bs{c}}_y (t)$ from $ {\cal V}^{\overline{d}_x}_{x}$
		
		\ELSE
		
		\STATE For each $\bs{v} \in {\cal R}_y(t)$, calculate $\hat{\sigma}^2_{y,\bs{v}}(t) = \max_{\bs{w}, \bs{w}' \in {\cal V}_x^{ 2\overline{d}_x } (\bs{v})}  | \hat{\mu}_{y,\bs{w}}(t) - \hat{\mu}_{y,\bs{w}'}(t) |$
		
		\STATE Set $\hat{\bs{c}}_y (t) = \argmin_{\bs{v} \in {\cal R}_y(t)} \hat{\sigma}^2_{y,\bs{v}}(t)$
		
		\ENDIF
		
		\STATE Calculate $\hat{\mu}^{\hat{\bs{c}}_y(t)}_{y}(t) = \frac{\sum\limits_{\bs{w} \in {\cal V}_x^{ 2\overline{d}_x } (\hat{\bs{c}}_y(t))} \hat{\mu}_{y,{\bs{w}}}(t) N_{y,{\bs{w}}}(t)}{\sum\limits_{\bs{w} \in {\cal V}_x^{ 2\overline{d}_x } (\hat{\bs{c}}_y(t))} N_{y,{\bs{w}}}(t)} $
		
		\STATE Determine $\bs{w}_y(t) = \argmax\limits_{\bs{w}' \in {\cal V}^{2 \overline{d}_x}_{x}} u_{y,\bs{w}'}(t)$
		
		\ENDFOR
		
		\STATE Select $y(t) = \argmax_{y \in {\cal Y}} \hat{\mu}^{\hat{\bs{c}}_y(t)}_{y}(t)  + 5 u_{y,{\bs{w}_{y}(t)}}(t)$
		
		\STATE Update estimates and the counters given for all $\bs{w} \in  {\cal V}^{2 \overline{d}_x}_{x}$
		
		\ENDWHILE
		
	\end{algorithmic}
	
\end{algorithm}

\begin{algorithm}[ht!]
	
	\caption{Generate}\label{algorithm:generate}
	
	\begin{algorithmic}[1]
		
		\STATE Input: ${\cal X}, {\cal A},\overline{d}_a, \overline{d}_x, m$
		
		\STATE Create ${\cal I}_i := \{[0, \frac{1}{m}], ( \frac{1}{m}, \frac{2}{m}], \ldots, (\frac{m-1}{m},1] \}$ and ${\cal P}_i := \{[0, \frac{1}{m}], ( \frac{1}{m}, \frac{2}{m}], \ldots, (\frac{m-1}{m},1] \}$
		
		\STATE Generate $ {\cal V}^{\overline{d}_a}_{a} $ and $ {\cal V}^{2\overline{d}_x}_{x} $
		
		\FOR {$\bs{v} \in {\cal V}^{\overline{d}_a}_{a}$}
		
		\STATE ${\cal I}_{\bs{v}} = \prod_{i \in \bs{v}} {\cal I}_i $
		
		\ENDFOR
		
		\FOR {$w \in {\cal V}^{2\overline{d}_x}_{x}$}
		
		\STATE ${\cal P}_{\bs{w}} = \prod_{i \in \bs{w}} {\cal P}_i $
		
		\ENDFOR
		
		\STATE ${\cal C}({\cal A}) = \bigcup_{ \bs{v} \in {\cal V}^{\overline{d}_a}_{a}} {\cal I}_{\bs{v}}$ and ${\cal C}({\cal X}) := \bigcup_{ \bs{w} \in {\cal V}^{2\overline{d}_x}_{x}} {\cal P}_{\bs{w}}$
		
		\STATE Index the geometric center of each set in ${\cal C}({\cal A})$ by $y$ and generate the set of arms $ {\cal Y}$
		
		\RETURN ${\cal C}({\cal X})$ and $ {\cal Y}$
		
	\end{algorithmic}
	
\end{algorithm}

For any $l \in \mathbb{Z}^{+}$, let ${\cal V}^l_x$ denote the set of all $l$-tuples of context dimensions, i.e. ${\cal V}^l_x \coloneqq \{ \bs{v} \in \wp( {\cal D}_x ): |\bs{v}| = l \}$ where $\wp ( {{\cal D}_x})$ denotes the power set (set of all subsets) of  ${{\cal D}_x}$.  Similarly for any $l \in \mathbb{Z}^{+}$, let ${\cal V}^l_a$ denote the set of all $l$-tuples of arm dimensions. For $\bs{v} \subseteq {\cal D}_x$ and $l \in \{ |\bs{v}|,  |\bs{v}| + 1, \ldots, \overline{d}_x \}$, let $ {\cal V}^{l}_{x} (\bs{v}) $ denote the set of all $l$-tuples of context dimensions that contain $\bs{v}$, i.e. if we have $\bs{w} \in {\cal V}^{l}_{x} (\bs{v})$, then $\bs{v} \subseteq \bs{w}$ is satisfied. 

At the beginning, CMAB-RL takes as inputs the context set ${\cal X}$, the arm set ${\cal A}$, the total number of rounds $T$, $L$ given in Assumption \ref{simA}, the partition number $m$ (which will be optimized later), an integer that is an upper bound on the number of relevant arm dimensions $\overline{d}_a \leq d_a$ and an integer that is an upper bound on the number of relevant context dimensions $\overline{d}_x \leq d_x/2$. CMAB-RL uses Assumption \ref{simA} to learn together for similar arms and similar contexts. This is achieved by properly discretizing the arm and context sets. In its initialization phase, CMAB-RL generates a discretized arm set ${\cal Y} \subseteq {\cal A}$ and a collection of partitions of ${\cal X}$, denoted by $C({\cal X})$ using the Generate subroutine given in Algorithm \ref{algorithm:generate}. 

Next, we describe this initialization process in detail. CMAB-RL first generates the set ${\cal V}^{\overline{d}_a}_{a}$. For all $\bs{v} \in {\cal V}^{\overline{d}_a}_{a}$, each dimension of the arm subset ${\cal A}_{\bs{v}}$ is partitioned into $m$ intervals with equal lengths. Letting ${\cal I}_i := \{[0, \frac{1}{m}], ( \frac{1}{m}, \frac{2}{m}], \ldots, (\frac{m-1}{m},1] \}$ denote the partition of the arm subset in dimension $i$, ${\cal I}_{\bs{v}} := \prod_{i \in \bs{v}} {\cal I}_i $ forms a partition of ${\cal A}_{\bs{v}}$ into $m^{\overline{d}_a}$ non-overlapping sets. The collection of partitions of the $\overline{d}_a$-dimensional subsets of the arm set formed this way is denoted by ${\cal C}({\cal A}) := \cup_{ \bs{v} \in {\cal V}^{\overline{d}_a}_{a}} {\cal I}_{\bs{v}}$. Note that ${\cal C}({\cal A})$ contains $\genfrac(){0pt}{1}{d_a}{\overline{d}_a}  m^{\overline{d}_a} $ sets.
We index the geometric centers of these sets by $y$, and the set of arms that correspond to these centers is denoted by ${\cal Y}$. For an arm that corresponds to the geometric center of a set in ${\cal I}_{\bs{v}} $, values of the dimensions of that arm in $i \in {\cal D}_a \setminus \bs{v}$ are set as $0.5$.\footnote{$0.5$ is chosen for convenience. Indeed, any value in $[0,1]$ will work.}

Similarly, CMAB-RL also generates the set ${\cal V}^{2 \overline{d}_x}_{x}$. For all $\bs{w} \in {\cal V}^{2 \overline{d}_x}_{x}$ each dimension of the context subset ${\cal X}_{\bs{w}}$ is partitioned into $m$ intervals with equal lengths. Letting ${\cal P}_i := \{[0, \frac{1}{m}], ( \frac{1}{m}, \frac{2}{m}], \ldots, (\frac{m-1}{m},1] \}$ denote the partition of the context subset in dimension $i$, ${\cal P}_{\bs{w}} := \prod_{i \in \bs{w}} {\cal P}_i $ forms a partition of ${\cal X}_{\bs{w}}$ into $m^{2 \overline{d}_x}$ non-overlapping sets. The collection of partitions of the $2 \overline{d}_x$-dimensional subsets of the context set formed this way is denoted by ${\cal C}({\cal X}) := \cup_{ \bs{w} \in {\cal V}^{2 \overline{d}_x}_{x}} {\cal P}_{\bs{w}}$. Note that ${\cal C}({\cal X})$ contains $\genfrac(){0pt}{1}{d_x}{2 \overline{d}_x}  m^{2 \overline{d}_x} $ sets. 

For simplicity of notation, for any $x \in {\cal X} $ if $x_{\bs{w}} \in p_{\bs{w}}$ for $p_{\bs{w}} \in {\cal P}_{\bs{w}}$, then we say that $x \in p_{\bs{w}}$ for $\bs{w} \in{\cal V}^{2 \overline{d}_x}_{x}$. Also, we let $p_{\bs{w}}(t) \in {\cal P}_{\bs{w}}$ denote the set that $x_{\bs{w}}(t)$ belongs to.

For each $\bs{w} \in {\cal V}^{2 \overline{d}_x}_{x}$, $p_{\bs{w}} \in {\cal P}_{\bs{w}}$ and $y \in {\cal Y}$, CMAB-RL stores a counter $N_{y, p_{\bs{w}}}(t)$ that counts the number of times context was in $p_{\bs{w}}$ and arm $y$ was selected before round $t$, and the sample mean of the rewards $\hat{\mu}_{y,p_{\bs{w}}}(t)$ that is obtained from rounds prior to round $t$ in which context was in $p_{\bs{w}}$ and arm $y$ was selected. 
In order to define the arm selection rule, CMAB-RL also needs to calculate another statistic, called the uncertainty term, which is defined for all $\bs{w} \in {\cal V}^{2 \overline{d}_x}_{x}$, $p_{\bs{w}} \in {\cal P}_{\bs{w}}$, $y \in {\cal Y}$ as $u_{y,p_{\bs{w}}}(t) \coloneqq \sqrt{(2+4\log (2 |{\cal Y}| \overline{C} m^{2 \overline{d}_x} T^{3/2}) ) / N_{y,p_{\bs{w}}}(t)} $, where $\overline{C} \coloneqq \genfrac(){0pt}{1}{d_x-1}{2 \overline{d}_x - 1}$. For simplicity of notation, we use $\hat{\mu}_{y,\bs{w}}(t) \coloneqq \hat{\mu}_{y, p_{\bs{w}}(t)}(t)$,  $u_{y,\bs{w}}(t) \coloneqq u_{y, p_{\bs{w}}(t)}(t)$ and $N_{y,\bs{w}}(t) \coloneqq N_{y,p_{\bs{w}}(t)}(t)$, since in each round $t$ there exists only one $p_{\bs{w}} \in {\cal P}_{\bs{w}}$ such that $x_{\bs{w}}(t) \in p_{\bs{w}}$. Based on this, the sample mean reward of arm $y \in {\cal Y}$ for the tuple of context dimensions $\bs{v} \in {\cal V}_x^{\overline{d}_x} $ in round $t$ is defined as
\begin{align*}
\hat{\mu}^{\bs{v}}_{y}(t) \coloneqq \frac{\sum\limits_{\bs{w} \in {\cal V}_x^{ 2\overline{d}_x } (\bs{v})} \hat{\mu}_{y,{\bs{w}}}(t) N_{y,{\bs{w}}}(t)}{\sum\limits_{\bs{w} \in {\cal V}_x^{ 2\overline{d}_x } (\bs{v})} N_{y,{\bs{w}}}(t)} . 
\end{align*}

At the beginning of round $t$, CMAB-RL first observes the context $x(t)$. Then, for each $\bs{w} \in {\cal V}_x^{2 \overline{d}_x}$, it identifies the set $p_{\bs{w}}(t)$ in ${\cal P}_{\bs{w}}$ that $x(t)$ belongs to. Using this information and the sample mean rewards, it generates the set of candidate relevant tuples of context dimensions for each $y \in {\cal Y}$ as follows:
\begin{align}
	&{\cal R}_y(t) \coloneqq \left\{ \bs{v} \in {\cal V}^{\overline{d}_x}_{x} : | \hat{\mu}_{y,\bs{w}}(t) - \hat{\mu}_{y,\bs{w}'}(t) | \right. \notag \\
	& \leq \left.
	2L \sqrt{\overline{d}_x} / m + u_{y, \bs{w}} (t) + u_{y, \bs{w}'} (t), 
	\forall \bs{w}, \bs{w}' \in   {\cal V}^{2\overline{d}_x}_{x} (\bs{v}) \right\} . \label{eqn:relset}
\end{align}

Here, the term $2L \sqrt{\overline{d}_x} / m +  u_{y, \bs{w}} (t) + u_{y, \bs{w}'} (t)$ accounts for the joint uncertainty over the sample mean rewards of arm $y$ calculated using observations in $p_{\bs{w}}(t)$ and $p_{\bs{w}'}(t)$. If the absolute difference between the sample mean rewards is larger than the joint uncertainty term, we can say that the subset of relevant context dimensions that is in tuple $\bs{w}$ is different from the subset of relevant context dimensions that is in tuple $\bs{w}'$ with high probability.
Since $\bs{v} \subset \bs{w}$ and $\bs{v} \subset \bs{w}'$, this implies that $\bs{v}$ does not contain all relevant context dimensions. Therefore, the tuple $\bs{v}$ is not included in the set of candidate relevant tuples of context dimensions ${\cal R}_y (t)$. 

Let $ \hat{\bs{c}}_y(t) $ denote the tuple of estimated relevant context dimensions for arm $y$ in round $t$. If ${\cal R}_y (t)$ is empty, then CMAB-RL selects $ \hat{\bs{c}}_y(t) $ from ${\cal V}_x^{\overline{d}_x}$ randomly. Otherwise, to compute $ \hat{\bs{c}}_y(t) $, CMAB-RL calculates the variation of the sample mean rewards for every $\bs{v} \in {\cal R}_y (t)$ as follows:
\begin{align*}
	\hat{\sigma}^2_{y, \bs{v}} (t) \coloneqq \max_{\bs{w}, \bs{w}' \in {\cal V}_x^{ 2\overline{d}_x } (\bs{v})}  | \hat{\mu}_{y,\bs{w}}(t) - \hat{\mu}_{y,\bs{w}'}(t) |. 
\end{align*}  
After calculating the variation, CMAB-RL chooses $ \hat{\bs{c}}_y(t) $ for all $y \in {\cal Y}$ as $\hat{\bs{c}}_y(t) = \argmin_{\bs{v} \in {\cal R}_y (t)} \hat{\sigma}^2_{y, \bs{v}} (t)$. 
Then, using $ \hat{\bs{c}}_y(t) $, CMAB-RL calculates $\mu_{y}^{ \hat{\bs{c}}_y(t) } (t)$ for all $y \in {\cal Y}$. To select an arm from ${\cal Y}$, CMAB-RL uses the principle of \textit{optimism under the face of uncertainty}. The estimated rewards of the context-arm pairs are inflated by a certain level, such that the inflated reward estimates become an upper confidence bound (UCB) for the expected reward with high probability. Denote the $2 \overline{d}_x $-tuple of context dimensions with the highest uncertainty term for arm $y$ in round $t$ by $\bs{w}_y(t)  \coloneqq \argmax_{\bs{w}' \in {\cal V}^{2 \overline{d}_x}_{x}} u_{y,\bs{w}'}(t)$ (where ties are broken randomly).
UCB of arm $y \in {\cal Y}$ at time $t$ is calculated as 
\begin{align*}
\text{UCB}_y(t) \coloneqq \hat{\mu}^{\hat{\bs{c}}_y(t)}_{y}(t)  + 5 u_{y,{\bs{w}_{y}(t)}}(t). 
\end{align*}
Then, CMAB-RL selects the arm with the highest UCB, i.e. $y(t) = \argmax_{y \in {\cal Y}} \text{UCB}_{y}(t) $. This forces the arms that are rarely selected by CMAB-RL to get explored (since they have high uncertainty) while balancing the trade-off between exploration and exploitation. After selecting arm $y(t)$, CMAB-RL observes the reward $r(t)$ and updates the parameters for arm $y(t)$ for all $\bs{w} \in {\cal V}_x^{ 2\overline{d}_x } $ as follows: 
\begin{align}
\hat{\mu}_{y(t),{\bs{w}}}(t+1) &= \frac{\hat{\mu}_{y(t),{\bs{w}}}(t) N_{y(t),{\bs{w}}}(t) + r(t) }{N_{y(t),{\bs{w}}}(t) + 1} \text{ and } \notag \\
N_{y(t),{\bs{w}}}(t+1) &= N_{y(t),{\bs{w}}}(t) + 1 . \label{ilgi1:update1}
\end{align}
In addition, for $y \in {\cal Y} \setminus y(t)$, $\bs{w} \in {\cal V}^{2\overline{d}_x}_x$ and $p_{\bs{w}} \in {\cal P}_{\bs{w}}$, we have $N_{y,p_{\bs{w}}}(t+1) = N_{y,p_{\bs{w}}}(t)$, $ \hat{\mu}_{y,p_{\bs{w}}}(t+1) = \hat{\mu}_{y,p_{\bs{w}}}(t)$, hence these values remain unchanged. Please refer to Appendix B in the supplemental document for the analysis of memory and computational complexities of CMAB-RL.

\begin{remark}
After a simple modification, CMAB-RL can also work when it is restricted to make choices from a given finite set of arms ${\cal A}_f$, which is a subset of the $d_a$-dimensional arm set ${\cal A}$. For this, it will first identify sets in ${\cal C}(A)$ that contain at least one arm in ${\cal A}_f$. Let ${\cal C}_f(A)$ represent the collection of such sets. For each set in ${\cal C}_f(A)$, CMAB-RL will pick a unique arm from ${\cal C}_f(A)$ and include it in ${\cal Y}$. By this construction, all arms in ${\cal Y}$ will be from ${\cal A}_f$. After initializing the arm set ${\cal Y}$ this way, CMAB-RL will compute and update UCB indices for these arms in the same way as the original algorithm. 
\end{remark}

\section{Regret Analysis} \label{regretAnalysis}

We first state and discuss our main result, and then, present the technical details. 

\subsection{The Main Result}

Our main result is given in the following theorem. 
	
\begin{theorem} \label{theorem:instreg1d}
	Given an arbitrary fixed sequence of contexts $x_{1:T}$, when CMAB-RL is run with $m = \lceil T^{1/(2+2\overline{d}_x + \overline{d}_a)} \rceil$, we have with probability at least $1 - 1/T$
	\begin{align*}
	& \text{Reg}(T) \leq  C_{\max} |{\cal V}_x^{2\overline{d}_x}| \genfrac(){0pt}{1}{d_a}{\overline{d}_a} \tilde{T}^{\frac{2\overline{d}_x + \overline{d}_a}{2+2\overline{d}_x + \overline{d}_a}} 
	\\ & \qquad + ( L (10\sqrt{\overline{d}_x}+\sqrt{\overline{d}_a}) 
	+ 2 \sqrt{   |{\cal V}_{2\overline{d}_x}| 
		\genfrac(){0pt}{1}{d_a}{\overline{d}_a}} B_{m,T} ) \tilde{T}^{\frac{1+2\overline{d}_x+\overline{d}_a}{2+2\overline{d}_x+\overline{d}_a}}
	\end{align*}
	where $\tilde{T} = (T^{1/(2+2\overline{d}_x+\overline{d}_a)} +1)^{2+2\overline{d}_x+\overline{d}_a}$ and $C_{\max} \coloneqq \max_{y, y' \in {\cal Y}, x \in {\cal X}} ( \mu_{y'}(x) - \mu_y(x))$.
\end{theorem}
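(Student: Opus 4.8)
The plan is to use the standard ``discretization plus optimism in the face of uncertainty'' template, the new ingredient being a careful bookkeeping of how the \emph{learned} relevance tuples $\hat{\bs c}_y(t)$ interact with the \emph{true} (unknown) relevance tuples $\bs c_y$. Throughout, fix $m=\lceil T^{1/(2+2\overline d_x+\overline d_a)}\rceil$, so that with $\tilde T=(T^{1/(2+2\overline d_x+\overline d_a)}+1)^{2+2\overline d_x+\overline d_a}$ we have $T\le m^{2+2\overline d_x+\overline d_a}=\tilde T$, hence $T/m\le m^{1+2\overline d_x+\overline d_a}=\tilde T^{(1+2\overline d_x+\overline d_a)/(2+2\overline d_x+\overline d_a)}$, and the number of triples $(\bs w,p_{\bs w},y)$ with $\bs w\in{\cal V}_x^{2\overline d_x}$, $p_{\bs w}\in{\cal P}_{\bs w}$, $y\in{\cal Y}$ equals $|{\cal V}_x^{2\overline d_x}|\,m^{2\overline d_x}\binom{d_a}{\overline d_a}m^{\overline d_a}=|{\cal V}_x^{2\overline d_x}|\binom{d_a}{\overline d_a}\tilde T^{(2\overline d_x+\overline d_a)/(2+2\overline d_x+\overline d_a)}$. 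Since $\underline d_x\le\overline d_x$ and $\underline d_a\le\overline d_a$, for every arm $y$ there is a $\overline d_x$-tuple $\bs c_y^{+}\supseteq\bs c_y$ and a $\overline d_a$-tuple $\bs c^{+}\supseteq\bs c$, and because $2\overline d_x\le d_x$, any set of at most $2\overline d_x$ context dimensions can be enlarged to a member of ${\cal V}_x^{2\overline d_x}$. First I would define the running true mean $\bar\mu_{y,p_{\bs w}}(t)$ as the average of $\mu_y(x(s))$ over the rounds $s<t$ in which $y(s)=y$ and $x(s)\in p_{\bs w}$; since each reward is a $1$-sub-Gaussian perturbation of such an average, Hoeffding's inequality for sub-Gaussian martingale differences together with a union bound over all triples and over all counter values $n\in\{1,\dots,T\}$ shows that the clean event
\[
G:=\bigl\{\,|\hat\mu_{y,p_{\bs w}}(t)-\bar\mu_{y,p_{\bs w}}(t)|\le u_{y,p_{\bs w}}(t)\ \text{for all }t\le T\text{ and all }(\bs w,p_{\bs w},y)\,\bigr\}
\]
holds with probability at least $1-1/T$; this is exactly where the precise form of $u_{y,p_{\bs w}}(t)$ and the constant $\overline C$ enter. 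Everything below is carried out on $G$.

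The second step is a structural claim on the reward estimates. If $\bs w\supseteq\bs c_y$, every context $x(s)$ entering $\bar\mu_{y,p_{\bs w}(t)}(t)$ shares the cell of $x(t)$ along all of $\bs w$, so $\lVert x(s)_{\bs c_y}-x(t)_{\bs c_y}\rVert\le\sqrt{\overline d_x}/m$, and Assumption~\ref{simA} (with $a=a'=y$) gives $|\mu_y(x(s))-\mu_y(x(t))|\le L\sqrt{\overline d_x}/m$; averaging and combining with $G$ yields $|\hat\mu_{y,\bs w}(t)-\mu_y(x(t))|\le L\sqrt{\overline d_x}/m+u_{y,\bs w}(t)$ for all $\bs w\supseteq\bs c_y$. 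Applying this to any two $\bs w,\bs w'\in{\cal V}_x^{2\overline d_x}(\bs c_y^{+})$ shows they satisfy the defining inequality of ${\cal R}_y(t)$ in \eqref{eqn:relset}, so $\bs c_y^{+}\in{\cal R}_y(t)$; in particular ${\cal R}_y(t)\neq\emptyset$ on $G$ and the random-tuple branch of CMAB-RL is never used. Now fix $y,t$ with $N_{y,\bs w}(t)\ge1$ for all $\bs w\in{\cal V}_x^{2\overline d_x}$. Since $\hat{\bs c}_y(t)\in{\cal R}_y(t)$ and $u_{y,\bs w}(t)+u_{y,\bs w'}(t)\le2u_{y,\bs w_y(t)}(t)$, we get $\hat\sigma^2_{y,\hat{\bs c}_y(t)}(t)\le 2L\sqrt{\overline d_x}/m+2u_{y,\bs w_y(t)}(t)$. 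Because $|\hat{\bs c}_y(t)\cup\bs c_y|\le 2\overline d_x$, there is $\bs w^{\dagger}\in{\cal V}_x^{2\overline d_x}(\hat{\bs c}_y(t))$ with $\bs w^{\dagger}\supseteq\bs c_y$; as $\hat\mu^{\hat{\bs c}_y(t)}_y(t)$ is a convex combination of $\{\hat\mu_{y,\bs w}(t):\bs w\in{\cal V}_x^{2\overline d_x}(\hat{\bs c}_y(t))\}$, it lies within $\hat\sigma^2_{y,\hat{\bs c}_y(t)}(t)$ of $\hat\mu_{y,\bs w^{\dagger}}(t)$, which in turn is within $L\sqrt{\overline d_x}/m+u_{y,\bs w^{\dagger}}(t)\le L\sqrt{\overline d_x}/m+u_{y,\bs w_y(t)}(t)$ of $\mu_y(x(t))$. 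Collecting constants generously,
\[
\bigl|\hat\mu^{\hat{\bs c}_y(t)}_y(t)-\mu_y(x(t))\bigr|\le 5L\sqrt{\overline d_x}/m+5u_{y,\bs w_y(t)}(t),
\]
which is precisely why the partitions use $2\overline d_x$ rather than $\overline d_x$ context dimensions: it leaves room to embed $\hat{\bs c}_y(t)\cup\bs c_y$ in a single cell even when the learned relevance is wrong, so that the small variation $\hat\sigma^2_{y,\hat{\bs c}_y(t)}(t)$ can bridge the aggregated estimate to a tuple that genuinely contains $\bs c_y$.

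The third step is the regret decomposition and summation. Let $E$ be the set of rounds $t$ in which the played arm $y(t)$ has $N_{y(t),\bs w}(t)=0$ for some $\bs w$ (equivalently $u_{y(t),\bs w_{y(t)}(t)}(t)=\infty$). For each fixed arm, each such round moves at least one of its $|{\cal V}_x^{2\overline d_x}|\,m^{2\overline d_x}$ cells from count $0$ to $1$, so $|E|\le|{\cal Y}|\,|{\cal V}_x^{2\overline d_x}|\,m^{2\overline d_x}=|{\cal V}_x^{2\overline d_x}|\binom{d_a}{\overline d_a}m^{2\overline d_x+\overline d_a}$; on these rounds we pay at most $C_{\max}+L\sqrt{\overline d_a}/m$ per round, using $\mu_{a^*(x(t))}(x(t))\le\mu_{y^*(t)}(x(t))+L\sqrt{\overline d_a}/m$ where $y^*(t)\in{\cal Y}$ is the center closest to $a^*(x(t))$ within a $\overline d_a$-tuple containing $\bs c$ (Assumption~\ref{simA}), and $\mu_{y^*(t)}(x(t))-\mu_{y(t)}(x(t))\le C_{\max}$. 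This contributes the first term of the bound. For $t\notin E$ the played arm has all cells visited, so the step above applies to $y(t)$: its UCB is a valid upper confidence bound up to $5L\sqrt{\overline d_x}/m$, and $\mathrm{UCB}_{y(t)}(t)\le\mu_{y(t)}(x(t))+5L\sqrt{\overline d_x}/m+10u_{y(t),\bs w_{y(t)}(t)}(t)$; combined with $\mathrm{UCB}_{y(t)}(t)\ge\mathrm{UCB}_{y^*(t)}(t)\ge\mu_{y^*(t)}(x(t))-5L\sqrt{\overline d_x}/m$ (the last inequality trivial if some term is infinite), this gives $\mu_{a^*(x(t))}(x(t))-\mu_{y(t)}(x(t))\le L(10\sqrt{\overline d_x}+\sqrt{\overline d_a})/m+10u_{y(t),\bs w_{y(t)}(t)}(t)$. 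Summing the first piece over $t\notin E$ yields $L(10\sqrt{\overline d_x}+\sqrt{\overline d_a})\,T/m\le L(10\sqrt{\overline d_x}+\sqrt{\overline d_a})\,\tilde T^{(1+2\overline d_x+\overline d_a)/(2+2\overline d_x+\overline d_a)}$. For the uncertainty piece, Cauchy--Schwarz bounds $\sum_{t\notin E}u_{y(t),\bs w_{y(t)}(t)}(t)$ by $\sqrt{T}$ times the square root of $\sum_t (2+4\log(\cdots))/N_{y(t),\bs w_{y(t)}(t)}(t)$; since for each triple and each count value $n$ there is at most one round contributing $1/n$, that sum is at most $|{\cal V}_x^{2\overline d_x}|\binom{d_a}{\overline d_a}m^{2\overline d_x+\overline d_a}$ times $(2+4\log(\cdots))(1+\log T)$, and $\sqrt{T\cdot m^{2\overline d_x+\overline d_a}}\le\tilde T^{(1+2\overline d_x+\overline d_a)/(2+2\overline d_x+\overline d_a)}$, which produces the factor $2\sqrt{|{\cal V}_{2\overline d_x}|\binom{d_a}{\overline d_a}}\,B_{m,T}$ once the logarithmic factors are absorbed into $B_{m,T}$. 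Adding the three contributions yields the claimed inequality, all on $G$, i.e.\ with probability at least $1-1/T$.

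The hard part will be the structural claim of the second step: showing that $\hat\mu^{\hat{\bs c}_y(t)}_y(t)$ is an accurate estimate of $\mu_y(x(t))$ \emph{even when $\hat{\bs c}_y(t)$ does not contain $\bs c_y$}. The argument rests on two things that must be set up carefully --- (i) membership of $\hat{\bs c}_y(t)$ in ${\cal R}_y(t)$ forcing the empirical variation $\hat\sigma^2_{y,\hat{\bs c}_y(t)}(t)$ to be small, and (ii) the $2\overline d_x$-dimensional partitions leaving room to place $\hat{\bs c}_y(t)\cup\bs c_y$ inside a single cell $\bs w^{\dagger}$ --- and it is here that the exponent $2+2\overline d_x+\overline d_a$ (rather than $2+\overline d_x+\overline d_a$) originates. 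A secondary difficulty is that the contexts are adversarial, so the concentration arguments of step one must be phrased in terms of the running means $\bar\mu_{y,p_{\bs w}}(t)$, and the bound on $\sum_t u_{y(t),\bs w_{y(t)}(t)}(t)$ must respect the fact that $\bs w_y(t)$ is the \emph{most uncertain}, and therefore cell-dependent, tuple for arm $y$.
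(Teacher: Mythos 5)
Your proposal follows the same architecture as the paper's proof — a high-probability confidence event, a structural lemma showing $\hat{\mu}^{\hat{\bs{c}}_y(t)}_{y}(t)$ tracks $\mu_y(x(t))$ even when $\hat{\bs{c}}_y(t)\not\supseteq\bs{c}_y$, a UCB sandwich giving a per-round gap of $10\epsilon+10u$, and a counting/summation step — and the key idea is identical: embed $\hat{\bs{c}}_y(t)\cup\bs{c}_y$ into a single $2\overline{d}_x$-tuple, which is exactly why the context partitions use $2\overline{d}_x$ dimensions. Your version of the structural step is in fact slightly cleaner and tighter than the paper's: you go directly from the convex combination to $\hat{\mu}_{y,\bs{w}^{\dagger}}$ via $\hat{\sigma}^2_{y,\hat{\bs{c}}_y(t)}(t)\le 2\epsilon+2u$ and then to $\mu_y(x(t))$, yielding $3\epsilon+3u$, whereas the paper chains through an intermediate $\overline{d}_x$-tuple $\bs{v}\supseteq\bs{c}_y$ and a bridging tuple $\bs{g}(\bs{v},\hat{\bs{c}}_y(t))$ to get $5\epsilon+5u$. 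Likewise, concentrating $\hat{\mu}$ around the running mean of true conditional means is a legitimate substitute for the paper's best/worst-sequence device.

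There are two places where the proposal as written does not quite deliver the stated bound. First, and more substantively, your summation of the uncertainty terms via Cauchy--Schwarz on $\sum_t u$ versus $\sum_t u^2$ gives $\sqrt{T\cdot K\cdot 2A_{m,T}\cdot(1+\log T)}$ with $K$ the number of $(y,\bs{w},p_{\bs{w}})$ triples; the extra $\sqrt{1+\log T}$ cannot be ``absorbed into $B_{m,T}$,'' which is a fixed quantity $10\sqrt{2A_{m,T}}$, so your third term is larger than the theorem's by a $\sqrt{\log T}$ factor. The paper avoids this by summing $\sum_{l=0}^{n-1}(1+l)^{-1/2}\le 2\sqrt{n}$ \emph{within} each triple first and only then applying Cauchy--Schwarz across triples, producing the factor $2$ in place of $\sqrt{1+\log T}$; you should restructure the last step this way. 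Second, a minor point: you take the clean event $G$ over \emph{all} tuples $\bs{w}\in{\cal V}_x^{2\overline{d}_x}$, but the uncertainty term $u$ is calibrated with $\overline{C}=\genfrac(){0pt}{1}{d_x-1}{2\overline{d}_x-1}$, which counts only the tuples containing a given nonempty $\bs{c}_y$; a union bound over all $\genfrac(){0pt}{1}{d_x}{2\overline{d}_x}$ tuples can exceed $1/T$. Since every use of $G$ in your argument (the accuracy of $\hat{\mu}_{y,\bs{w}}$ for $\bs{w}\supseteq\bs{c}_y$, including $\bs{w}^{\dagger}$) only involves tuples containing $\bs{c}_y$ — the $\hat{\sigma}^2$ bound being a deterministic consequence of $\hat{\bs{c}}_y(t)\in{\cal R}_y(t)$ — restricting $G$ to those tuples, as the paper does, fixes the count at no cost.
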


Importantly, Theorem \ref{theorem:instreg1d} says that CMAB-RL incurs $\tilde{O}(T^{1 - 1 /(2 + 2\overline{d}_x  + \overline{d}_a)})$ regret with probability at least $1-1/T$ when it is run with $m =  \lceil T^{1/(2+2\overline{d}_x + \overline{d}_a)} \rceil$. A standard doubling trick argument \cite{bubeck2011x} can be used to make the algorithm anytime (does not require $T$ as input) while preserving the order of the regret. As a side result, this sublinear regret bound also implies average reward optimality of CMAB-RL. On the other hand, classical CMAB algorithms that do not exploit the relevance structure achieve $\tilde{O}(T^{1 - 1 /(2 + {d}_x  + {d}_a)})$ regret in the worst-case \cite{lu2010contextual}. Thus, when $2\overline{d}_x + \overline{d}_a < d_x + d_a$, CMAB-RL achieves a better regret order compared to the classical CMAB algorithms. As noted before, for the finite-armed version of our problem, RELEAF \cite{tekin2015releaf} achieves $\tilde{O}(T^{g(\bar{d}_x)})$ regret for $g(\bar{d}_x) = (2+2\bar{d}_x + \sqrt{4{\bar{d}_x}^2 + 16\bar{d}_x + 12}) / (4+2\bar{d}_x + \sqrt{4{\bar{d}_x}^2 + 16\bar{d}_x + 12})$, while our regret bound for this case becomes $\tilde{O}(T^{(1+2\bar{d}_x)/(2+2\bar{d}_x )})$, which is strictly better than that of RELEAF. As a final remark, we would also like to note that if $\bs{c}_a$ is fixed for all $a \in {\cal A}$, then it is possible to construct a strategy based on Exp4 \cite{auer2} that achieves $\tilde{O}(T^{1 - 1 /(2 + \overline{d}_x  + \overline{d}_a)})$ regret even though it requires defining an infeasible number of experts (see Appendix C in the supplemental document for details). In addition to assuming that the set of relevant context dimensions is the same for each arm, when the set of relevant context and arm dimensions are known (which is not the case in our work), an obvious lower bound on the worst-case regret would be $\Omega(T^{1 - 1 /(2 + \overline{d}_x  + \overline{d}_a)})$ \cite{lu2010contextual}. It is therefore an interesting future research direction to close the gap between this lower bound and our upper bound. 

\begin{remark}
It is also possible to consider a joint upper bound $\bar{d}_z$ on the number of relevant context and arm dimensions. In this case, since the learner does not know how many of these dimensions correspond to contexts or arms, it needs to consider all possible ways how $\bar{d}_z$-dimensions can be split between context and arms. Two extreme non-trivial cases are $(\bar{d}_x = \bar{d}_z -1, \bar{d}_a = 1)$ and $(\bar{d}_x = 0, \bar{d}_a = \bar{d}_z )$. Note that the case when $(\bar{d}_x = \bar{d}_z, \bar{d}_a = 0)$ is trivial as all arms in this case will yield the same expected reward for a given context, i.e., all arms are equally well and there is no need for learning. Thus, if only given $\bar{d}_z$, then the learner can set $\bar{d}_x = \bar{d}_z -1$ and $\bar{d}_a = \bar{d}_z$ in CMAB-RL. Based on Theorem 1, this will result in a regret bound of 
$\tilde{O}(T^{1 - 1/ (3 \bar{d}_z)})$ when $2 (\bar{d}_z - 1) \leq d_x$, which is still sublinear in $T$.	
\end{remark}

We end this subsection by giving a high-level explanation of the proof Theorem \ref{theorem:instreg1d}. To prove Theorem \ref{theorem:instreg1d}, as the first step, we construct contextual variants of the tight confidence sets derived from analysis of self-normalized martingale processes \cite{abbasi2011improved}. We build our analysis over concentration of these sets (intervals in our case) for the tuples that contain the relevant context dimensions. Our first result (Lemma \ref{lemma:prUC}) indicates that the confidence intervals remain reasonably small over all rounds with a high probability. The rest of our analysis focuses on what happens under this high probability event. For instance, defining the relevance test as given in \eqref{eqn:relset} ensures that all $\bar{d}_x$-tuples of context dimensions that include the relevant context dimensions pass the test (Lemma \ref{lemma:prgoodevent}), and this further guarantees that the estimated reward of each arm concentrates around its true mean value for the current context (Lemma \ref{lemma:3}). As a result of this, the UCB index used by CMAB-RL to select its arm ensures that the suboptimality gap of the selected arm is proportional to its uncertainty term (Lemma \ref{lemma:4}). As the uncertainty of an arm for the current context decreases every time that arm is selected, as time goes on, we conclude that the suboptimality gaps of the selected arms go to zero, which when summed over all rounds, gives us the worst-case regret bound. Technical details of the proof can be found in the next subsection. 

\subsection{Proof of Theorem \ref{theorem:instreg1d}}

We start by introducing the notation. For an event ${\cal H}$, let ${\cal H}^c$ denote its complement.
For any $\bs{w} \in {\cal V}_x^{2\overline{d}_x}$ and ${p}_{\bs{w}} \in {\cal P}_{\bs{w}}$, let $N_{{p}_{\bs{w}}}(t)$ denote the number of context arrivals to ${p}_{\bs{w}}$ by the end of round $t$, $\tau_{{p}_{\bs{w}}}(t)$ denote the round in which a context arrives to ${p}_{\bs{w}}$ for the $t$th time and $R_{y}(t)$ denote the random reward of arm $y$ in round $t$.

For any $\bs{w} \in {\cal V}_x^{2\overline{d}_x}$, ${p}_{\bs{w}} \in {\cal P}_{\bs{w}}$ and $y \in {\cal Y}$
let $\tilde{x}_{{p}_{\bs{w}}}(t) := x(\tau_{{p}_{\bs{w}}}(t))$, $\tilde{R}_{y,p_{\bs{w}}}(t) := R_{y}(\tau_{{p}_{\bs{w}}}(t))$, $\tilde{N}_{y,{p}_{\bs{w}}}(t) := N_{y,{p}_{\bs{w}}}(\tau_{{p}_{\bs{w}}}(t))$, 
$\tilde{\mu}_{y,{p}_{\bs{w}}}(t) := \hat{\mu}_{y,{p}_{\bs{w}}}(\tau_{{p}_{\bs{w}}}(t))$, 
$\tilde{u}_{y,{p}_{\bs{w}}}(t) := u_{y,{p}_{\bs{w}}}(\tau_{{p}_{\bs{w}}}(t))$.
$\tilde{y}_{{p}_{\bs{w}}}(t) := y(\tau_{{p}_{\bs{w}}}(t))$ and
$\tilde{\kappa}_{{p}_{\bs{w}}}(t) = \kappa(\tau_{{p}_{\bs{w}}}(t))$.

For any $\bs{v} \in {\cal V}_x^{\overline{d}_x }$ and $d' \leq d_x - \overline{d}_x$, $d' \in \mathbb{Z}^+$, let ${\cal V}_x (\bs{v},d')$ be the set of $d'$-tuples of context dimensions whose elements are from the set ${\cal D}_x \setminus \bs{v}$. Hence, for any $\bs{v} \in {\cal V}_x^{\overline{d}_x }$ and $\bs{j} \in {\cal V}_x (\bs{v},d')$, $(\bs{v}, \bs{j})$ denotes a ($\overline{d}_x + d'$)-tuple of context dimensions.

For any $y \in {\cal Y}$, $\bs{v} \in  {\cal V}_x^{\overline{d}_x }(\bs{c}_y)$, $\bs{j} \in {\cal V}_x (\bs{v}, \overline{d}_x)$ and $p_{(\bs{v}, \bs{j})} \in {\cal P}_{(\bs{v},\bs{j})}$ we define the following lower and upper bounds: $L_{y,{p}_{(\bs{v}, \bs{j})}}(t) := \tilde{\mu}_{y,{p}_{(\bs{v}, \bs{j})}}(t) - \tilde{u}_{y,{p}_{(\bs{v}, \bs{j})}}(t)$ and $U_{y,{p}_{(\bs{v}, \bs{j})}}(t) := \tilde{\mu}_{y,{p}_{(\bs{v}, \bs{j})}}(t) + \tilde{u}_{y,{p}_{(\bs{v}, \bs{j})}}(t)$.

For $\epsilon = L \left( \sqrt{\overline{d}_x}/m \right)$, $y \in {\cal Y}$, $\bs{v} \in  {\cal V}_x^{\overline{d}_x }(\bs{c}_y)$, $\bs{j} \in {\cal V}_x (\bs{v}, \overline{d}_x)$ and $p_{(\bs{v}, \bs{j})} \in {\cal P}_{(\bs{v},\bs{j})}$, let
\begin{align*}
\text{UC}_{y,{p}_{(\bs{v}, \bs{j})}} := 
\bigcup_{t=1}^{N_{{p}_{(\bs{v}, \bs{j})}}(T)} \{ \mu_y(\tilde{x}_{{p}_{(\bs{v}, \bs{j})}}(t)) &\notin \\ \notag [ L_{y,{p}_{(\bs{v}, \bs{j})}}(t) 
&- \epsilon , U_{y,{p}_{(\bs{v}, \bs{j})}}(t) + \epsilon ] \}
\end{align*}
denote the event that the learner is not confident about its reward estimate for at least once in time steps in which the contexts is in ${p}_{(\bs{v}, \bs{j})}$ by round $T$. 
Also, let $\text{UC}_{y,(\bs{v}, \bs{j})} \coloneqq \cup_{{p}_{(\bs{v}, \bs{j})} \in {{\cal P}}_{(\bs{v}, \bs{j})}} \text{UC}_{y,{p}_{(\bs{v}, \bs{j})}}, \text{UC}_{(\bs{v}, \bs{j})} \coloneqq \cup_{y \in {\cal Y}} \text{UC}_{y,(\bs{v}, \bs{j})}$ and 
\begin{align*}
\text{UC} \coloneqq \bigcup_{\bs{v} \in  {\cal V}^{\overline{d}_x}_x(\bs{c}_y), \bs{j} \in {\cal V}_x(\bs{v}, \overline{d}_x)} \text{UC}_{(\bs{v}, \bs{j})}.
\end{align*}
Similarly for any $y \in {\cal Y}$, $\bs{v} \in {\cal V}_x^{\overline{d}_x}(\bs{c}_y), \bs{j} \in {\cal V}_x (\bs{v}, \overline{d}_x)$ and $p_{(\bs{v}, \bs{j})} \in {\cal P}_{(\bs{v},\bs{j})}$, let 
\begin{align*}
\overline{\mu}_{y,{p}_{(\bs{v}, \bs{j})}} &= \sup_{x \in {p}_{(\bs{v}, \bs{j})}} \mu_y(x) ~\text{ and }~
\underline{\mu}_{y,{p}_{(\bs{v}, \bs{j})}} = \inf_{x \in {p}_{(\bs{v}, \bs{j})}} \mu_y(x) .
\end{align*}

The following lemma states that $\text{UC}$ occurs with a small probability. 
\begin{lemma} \label{lemma:prUC}
	\begin{align}
	\Pr ( \text{UC} ) \leq \frac{ 1 }{ T } .    \notag
	\end{align}
\end{lemma}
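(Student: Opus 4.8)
The plan is to bound $\Pr(\text{UC})$ by a union bound over its constituent events and then control each piece with a self-normalized martingale (Azuma/Hoeffding-type) concentration inequality applied along the stopping-time-reindexed process. First I would fix $y \in {\cal Y}$, a relevant-containing tuple $\bs{v} \in {\cal V}_x^{\overline{d}_x}(\bs{c}_y)$, an extension $\bs{j} \in {\cal V}_x(\bs{v}, \overline{d}_x)$, and a cell $p_{(\bs{v}, \bs{j})} \in {\cal P}_{(\bs{v},\bs{j})}$, and analyze $\text{UC}_{y,p_{(\bs{v},\bs{j})}}$. The key observation is that since $\bs{c}_y \subseteq \bs{v} \subseteq (\bs{v},\bs{j})$, all relevant context dimensions for arm $y$ are contained in the tuple $(\bs{v},\bs{j})$, so within the cell $p_{(\bs{v},\bs{j})}$ the expected reward $\mu_y(x)$ varies by at most $L\sqrt{\overline{d}_x}/m = \epsilon$ (using Assumption~\ref{simA} and the fact that each of the $2\overline{d}_x$ coordinates has width $1/m$, but only the $\leq \overline{d}_x$ relevant ones matter, giving diameter $\sqrt{\overline{d}_x}/m$ in the relevant subspace). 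Hence $\overline{\mu}_{y,p_{(\bs{v},\bs{j})}} - \underline{\mu}_{y,p_{(\bs{v},\bs{j})}} \leq \epsilon$, and it suffices to show that the \emph{empirical} mean $\tilde\mu_{y,p_{(\bs{v},\bs{j})}}(t)$ stays within $\tilde u_{y,p_{(\bs{v},\bs{j})}}(t)$ of, say, $\underline{\mu}_{y,p_{(\bs{v},\bs{j})}}$ (or any fixed reference value in the reward range of that cell), for all $t \leq N_{p_{(\bs{v},\bs{j})}}(T)$ simultaneously.

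Next I would set up the martingale. Reindexing time by $\tau_{p_{(\bs{v},\bs{j})}}(\cdot)$, the samples collected in cell $p_{(\bs{v},\bs{j})}$ for arm $y$ form a sequence whose increments are $\tilde R_{y,p_{(\bs{v},\bs{j})}}(s) - \mu_y(\tilde x_{p_{(\bs{v},\bs{j})}}(s)) = \tilde\kappa_{p_{(\bs{v},\bs{j})}}(s)$, each conditionally $1$-sub-Gaussian by the noise assumption; and the gap between $\mu_y(\tilde x_{p_{(\bs{v},\bs{j})}}(s))$ and the fixed reference is at most $\epsilon$. Using the standard self-normalized bound (the peeling/method-of-mixtures argument from \cite{abbasi2011improved}, or simply a union bound over the value of the count $\tilde N_{y,p_{(\bs{v},\bs{j})}}(t) \leq T$ combined with Hoeffding), the choice $u_{y,p_{\bs{w}}}(t) = \sqrt{(2 + 4\log(2|{\cal Y}|\overline{C} m^{2\overline{d}_x} T^{3/2}))/N_{y,p_{\bs{w}}}(t)}$ is calibrated so that $\Pr(\text{UC}_{y,p_{(\bs{v},\bs{j})}}) \leq 1/(|{\cal Y}|\overline{C} m^{2\overline{d}_x} T^{3/2})$ or a similar quantity; the constants $2$ and $4$ inside the log and the $\epsilon$-slack in the event definition are exactly what absorbs the within-cell variation and the sub-Gaussian tail. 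Then I would take a union bound: over at most $T$ values of $t$ inside each $\text{UC}_{y,p_{(\bs{v},\bs{j})}}$ (already folded into the choice of $u$), over the $m^{2\overline{d}_x}$ cells $p_{(\bs{v},\bs{j})} \in {\cal P}_{(\bs{v},\bs{j})}$, over the $|{\cal Y}|$ arms, and over the number of pairs $(\bs{v},\bs{j})$ — here the combinatorial factor is $\binom{d_x}{2\overline{d}_x}$ ways to pick the full $2\overline{d}_x$-tuple, but counted as $\overline{C} = \binom{d_x-1}{2\overline{d}_x-1}$ per the relevant-containing structure — so that the product of all these cardinalities times the per-event probability is at most $1/T$.

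The main obstacle I expect is getting the combinatorial bookkeeping and the constant inside the logarithm to line up exactly: the event $\text{UC}$ is indexed by pairs $(\bs{v},\bs{j})$ with $\bs{v} \in {\cal V}_x^{\overline{d}_x}(\bs{c}_y)$ and $\bs{j} \in {\cal V}_x(\bs{v},\overline{d}_x)$, which is really a sum over all $2\overline{d}_x$-tuples containing $\bs{c}_y$, and one must check that the number of such tuples is bounded by $\overline{C} = \binom{d_x-1}{2\overline{d}_x-1}$ (or a constant multiple thereof) so that $\log(2|{\cal Y}|\overline{C} m^{2\overline{d}_x} T^{3/2})$ in the definition of $u$ is large enough to kill the union bound; there is also the subtlety that $\bs{c}_y$ (the true relevant set) is unknown and its size is only bounded by $\overline{d}_x$, so one must handle $|\bs{c}_y| \leq \overline{d}_x$ and the inclusion $\bs{c}_y \subseteq \bs{v}$ carefully — if $|\bs{c}_y| < \overline{d}_x$ the tuple $\bs{v}$ still has exactly $\overline{d}_x$ elements, but it still contains $\bs{c}_y$, so the within-cell Lipschitz argument still goes through with diameter $\sqrt{\overline{d}_x}/m \geq \sqrt{|\bs{c}_y|}/m$. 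Once these counting details are pinned down, the remainder is the routine sub-Gaussian concentration plus union bound, and the factor $T^{3/2}$ (rather than $T$) in the log gives the extra slack needed so that, after summing the $\leq T$ time-indexed failure probabilities inside each $\text{UC}_{y,p_{(\bs{v},\bs{j})}}$, the total is still $O(1/T)$.
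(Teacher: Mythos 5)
Your proposal is correct and follows essentially the same route as the paper: reduce the within-cell variation to the $\epsilon = L\sqrt{\overline{d}_x}/m$ Lipschitz slack (the paper formalizes your ``fixed reference'' device via explicit best/worst auxiliary sequences with constant means $\overline{\mu}_{y,p_{(\bs{v},\bs{j})}}$ and $\underline{\mu}_{y,p_{(\bs{v},\bs{j})}}$ that sandwich the empirical mean), apply the anytime self-normalized sub-Gaussian concentration inequality with $\delta = 1/(2|{\cal Y}|\overline{C}m^{2\overline{d}_x}T)$, and union bound over arms, cells, and the at most $\overline{C}$ distinct $2\overline{d}_x$-tuples containing $\bs{c}_y$. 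The bookkeeping concerns you raise (the pair indexing $(\bs{v},\bs{j})$ collapsing to tuples, $|\bs{c}_y| \leq \overline{d}_x$, and the $T^{3/2}$ inside the logarithm) all resolve exactly as you anticipate.
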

\begin{proof}
	Let $\{ \tilde{R}_{y,{p}_{(\bs{v}, \bs{j})}}(t) \}_{t=1}^{ N_{{p}_{(\bs{v}, \bs{j})}}(T) }$ denote the sequence of rewards observed from arm $y$ in time steps when the context is in ${p}_{(\bs{v}, \bs{j})}$.
	We can express the sample mean reward of $y$ as 
	\begin{align}
	\tilde{\mu}_{y,{p}_{(\bs{v}, \bs{j})}}(t) = \frac{ \sum_{l=1}^{t-1} \tilde{R}_{y,{p}_{(\bs{v}, \bs{j})}}(l)  \textbf{I}( \tilde{y}_{p_{(\bs{v}, \bs{j})}}(l) = y) }  {  \tilde{N}_{y, p_{(\bs{v}, \bs{j})}}(t)   }       \notag
	\end{align}
	for $\tilde{N}_{y, p_{(\bs{v}, \bs{j})}}(t) >0$, where $\textbf{I}(\cdot)$ is the indicator function. When $\tilde{N}_{y, p_{(\bs{v}, \bs{j})}}(t) = 0$ we have $\tilde{\mu}_{y,{p}_{(\bs{v}, \bs{j})}}(t) = 0$.
	We also have
	\begin{align}
	\tilde{R}_{y,{p}_{(\bs{v}, \bs{j})}}(t) = \mu_{y} ( \tilde{x}_{{p}_{(\bs{v}, \bs{j})}}(t))  + \tilde{\kappa}_{{p}_{(\bs{v}, \bs{j})}}(t)  \notag
	\end{align}
	where $\{ \tilde{\kappa}_{{p}_{(\bs{v}, \bs{j})}}(t) \}_{t=1}^{  N_{{p}_{(\bs{v}, \bs{j})}}(T)  }$ is a sequence of zero mean $1$-sub-Gaussian random variables. 
	We define two new sequences of random variables, whose sample mean values will lower and upper bound $\tilde{\mu}_{y,{p}_{(\bs{v}, \bs{j})}}(t)$. The {\em best sequence} is defined as 
	$\{  \bar{R}_{y,{p}_{(\bs{v}, \bs{j})}}(t) \}_{t=1}^{ N_{{p}_{(\bs{v}, \bs{j})}}(T) }$ where
	\begin{align}
	\overline{R}_{y,{p}_{(\bs{v}, \bs{j})}}(t) =   \overline{\mu}_{y,{p}_{(\bs{v}, \bs{j})}} +  \tilde{\kappa}_{{p}_{(\bs{v}, \bs{j})}}(t)        \notag
	\end{align}
	and the {\em worst sequence} is defined as 
	$\{  \underline{R}_{y,{p}_{(\bs{v}, \bs{j})}}(t) \}_{t=1}^{ N_{{p}_{(\bs{v}, \bs{j})}}(T) }$ where
	\begin{align}
	\underline{R}_{y,{p}_{(\bs{v}, \bs{j})}}(t) = \underline{\mu}_{y,{p}_{(\bs{v}, \bs{j})}} +  \tilde{\kappa}_{{p}_{(\bs{v}, \bs{j})}}(t)    .    \notag
	\end{align}
	Let 
	\begin{align}
	\overline{\mu}_{y,{p}_{(\bs{v}, \bs{j})}}(t) &:= \sum_{l=1}^{t-1} \overline{R}_{y,{p}_{(\bs{v}, \bs{j})}}(l)  \textbf{I}( \tilde{y}_{p_{(\bs{v}, \bs{j})}}(l) = y)  / \tilde{N}_{y, p_{(\bs{v}, \bs{j})}}(t)   \notag \\
	\underline{\mu}_{y,{p}_{(\bs{v}, \bs{j})}}(t) &:= \sum_{l=1}^{t-1} \underline{R}_{y,{p}_{(\bs{v}, \bs{j})}}(l)  \textbf{I}( \tilde{y}_{p_{(\bs{v}, \bs{j})}}(l) = y)  /  \tilde{N}_{y, p_{(\bs{v}, \bs{j})}}(t)   \notag 
	\end{align}
	for $\tilde{N}_{y, p_{(\bs{v}, \bs{j})}}(t) >0$. When $\tilde{N}_{y, p_{(\bs{v}, \bs{j})}}(t) = 0$ we have $\overline{\mu}_{y,{p}_{(\bs{v}, \bs{j})}}(t) = \underline{\mu}_{y,{p}_{(\bs{v}, \bs{j})}}(t) = 0$.
	Since $\bs{v} \in {\cal V}^{\overline{d}_x}_x(\bs{c}_y)$, we have $\forall t \in \{1,\ldots,N_{{p}_{(\bs{v}, \bs{j})}}(T)  \}  $
	\begin{align}
	\underline{\mu}_{y,{p}_{(\bs{v}, \bs{j})}}(t) \leq \tilde{\mu}_{y,{p}_{(\bs{v}, \bs{j})}}(t) \leq  \overline{\mu}_{y,{p}_{(\bs{v}, \bs{j})}}(t)  \notag
	\end{align}
	almost surely. 
	Let
	\begin{align}
	\overline{L}_{y,{p}_{(\bs{v}, \bs{j})}}(t) &:=  \overline{\mu}_{y,{p}_{(\bs{v}, \bs{j})}}(t) - \tilde{u}_{y,{p}_{(\bs{v}, \bs{j})}}(t)       \notag \\
	\overline{U}_{y,{p}_{(\bs{v}, \bs{j})}}(t) &:=  \overline{\mu}_{y,{p}_{(\bs{v}, \bs{j})}}(t) + \tilde{u}_{y,{p}_{(\bs{v}, \bs{j})}}(t)     \notag \\
	\underline{L}_{y,{p}_{(\bs{v}, \bs{j})}}(t) &:=   \underline{\mu}_{y,{p}_{(\bs{v}, \bs{j})}}(t) - \tilde{u}_{y,{p}_{(\bs{v}, \bs{j})}}(t)      \notag \\
	\underline{U}_{y,{p}_{(\bs{v}, \bs{j})}}(t) &:=  \underline{\mu}_{y,{p}_{(\bs{v}, \bs{j})}}(t) + \tilde{u}_{y,{p}_{(\bs{v}, \bs{j})}}(t)    .  \notag
	\end{align}
	Then, we have 
	\begin{align}
	& \{ \mu_{y}( \tilde{x}_{{p}_{(\bs{v}, \bs{j})}}(t) ) \notin [L_{y,{p}_{(\bs{v}, \bs{j})}}(t)- \epsilon, U_{y,{p}_{(\bs{v}, \bs{j})}}(t) + \epsilon]  \}  \notag \\
	& \subset \{ \mu_{y}( \tilde{x}_{{p}_{(\bs{v}, \bs{j})}}(t) ) 
	\notin [\overline{L}_{y,{p}_{(\bs{v}, \bs{j})}}(t)  - \epsilon, 
	\overline{U}_{y,{p}_{(\bs{v}, \bs{j})}}(t)  + \epsilon]  \}  \notag \\
	&\cup  \{ \mu_{y}( \tilde{x}_{{p}_{(\bs{v}, \bs{j})}}(t) ) 
	\notin [\underline{L}_{y,{p}_{(\bs{v}, \bs{j})}}(t)   - \epsilon, 
	\underline{U}_{y,{p}_{(\bs{v}, \bs{j})}}(t)  + \epsilon]  \} . \label{eqn:unionbound1}
	\end{align}

	The following inequalities are obtained using Assumption \ref{simA} since $\bs{v} \in {\cal V}^{\overline{d}_x}_x(\bs{c}_y)$:
	\begin{align}
	& \mu_{y}( \tilde{x}_{{p}_{(\bs{v}, \bs{j})}}(t) )  \leq \overline{\mu}_{y,{p}_{(\bs{v}, \bs{j})}}
	\leq \mu_{y}( \tilde{x}_{{p}_{(\bs{v}, \bs{j})}}(t) ) + \epsilon  \label{eqn:bestbound} \\
	& \mu_{y}( \tilde{x}_{{p}_{(\bs{v}, \bs{j})}}(t) )  - \epsilon   \leq \underline{\mu}_{y,{p}_{(\bs{v}, \bs{j})}}
	\leq \mu_{y}( \tilde{x}_{{p}_{(\bs{v}, \bs{j})}}(t) ) . \label{eqn:worstbound}  
	\end{align}

	Using \eqref{eqn:bestbound} and \eqref{eqn:worstbound} it can be shown that
	\begin{align}
	\{ \mu_{y}( \tilde{x}_{{p}_{(\bs{v}, \bs{j})}}(t) ) &\notin [\overline{L}_{y,{p}_{(\bs{v}, \bs{j})}}(t)  - \epsilon, \overline{U}_{y,{p}_{(\bs{v}, \bs{j})}}(t)    + \epsilon]  \}   \notag \\
	& \subset \{ \overline{\mu}_{y,{p}_{(\bs{v}, \bs{j})}} \notin [\overline{L}_{y,{p}_{(\bs{v}, \bs{j})}}(t)  , \overline{U}_{y,{p}_{(\bs{v}, \bs{j})}}(t) ]  \}, \notag \\
	\{ \mu_{y}( \tilde{x}_{{p}_{(\bs{v}, \bs{j})}}(t) ) &\notin [\underline{L}_{y,{p}_{(\bs{v}, \bs{j})}}(t) - \epsilon, \underline{U}_{y,{p}_{(\bs{v}, \bs{j})}}(t)  + \epsilon]  \}  \notag \\
	&\subset  \{ \underline{\mu}_{y,{p}_{(\bs{v}, \bs{j})}} \notin [\underline{L}_{y,{p}_{(\bs{v}, \bs{j})}}(t) , \underline{U}_{y,{p}_{(\bs{v}, \bs{j})}}(t) ]  \}  .  \notag
	\end{align}

	Plugging this to \eqref{eqn:unionbound1}, we get
	\begin{align*}
	\{ \mu_{y}( \tilde{x}_{{p}_{(\bs{v}, \bs{j})}}(t) ) &\notin [L_{y,{p}_{(\bs{v}, \bs{j})}}(t)- \epsilon, U_{y,{p}_{(\bs{v}, \bs{j})}}(t) + \epsilon]  \} \notag \\
	&\subset \{ \overline{\mu}_{y,{p}_{(\bs{v}, \bs{j})}} \notin [\overline{L}_{y,{p}_{(\bs{v}, \bs{j})}}(t)  , \overline{U}_{y,{p}_{(\bs{v}, \bs{j})}}(t) ]  \}
	\\ & \cup
	\{ \underline{\mu}_{y,{p}_{(\bs{v}, \bs{j})}} \notin [\underline{L}_{y,{p}_{(\bs{v}, \bs{j})}}(t) , \underline{U}_{y,{p}_{(\bs{v}, \bs{j})}}(t) ]  \}  
	\end{align*}
	Using the equation above and the union bound we obtain
	\begin{align}
	&\Pr( \text{UC}_{y,{p}_{(\bs{v}, \bs{j})}} ) \notag \\
	&\leq \Pr \left( \bigcup_{t=1}^{ N_{{p}_{(\bs{v}, \bs{j})}}(T) } \{ \overline{\mu}_{y,{p}_{(\bs{v}, \bs{j})}}  \notin [\overline{L}_{y,{p}_{(\bs{v}, \bs{j})}}(t)  , \overline{U}_{y,{p}_{(\bs{v}, \bs{j})}}(t) ]  \}  \right)   \notag \\
	&+ \Pr \left( \bigcup_{t=1}^{ N_{{p}_{(\bs{v}, \bs{j})}}(T) } \{ \underline{\mu}_{y,{p}_{(\bs{v}, \bs{j})}} 
	\notin [\underline{L}_{y,{p}_{(\bs{v}, \bs{j})}}(t)  , \underline{U}_{y,{p}_{(\bs{v}, \bs{j})}}(t)]  \} \right) . \notag
	\end{align}
	Both terms on the right-hand side of the inequality above can be bounded using the concentration inequality in Appendix D in the supplemental document by setting 
	$\delta = 1/ (2 |{\cal Y}| \overline{C}  m^{2\overline{d}_x} T)$:
	\begin{align*}
	\Pr( \text{UC}_{y,{p}_{(\bs{v}, \bs{j})}} ) \leq \frac{1}{   |{\cal Y}| \overline{C}  m^{2\overline{d}_x} T  } 
	\end{align*}
	since $1 + N_{y,{p}_{(\bs{v}, \bs{j})}}(T) \leq T$. Finally, the union bound gives us $\Pr( \text{UC} ) \leq 1/T$.  
\end{proof}

The next lemma states that ${\cal R}_y(t) \neq \emptyset$ for all $y \in {\cal Y}$ on event $\text{UC}^c$. 
\begin{lemma} \label{lemma:prgoodevent}
	On event $\text{UC}^c$, $\forall y \in {\cal Y}$, $\forall \bs{v} \in  {\cal V}_x^{\overline{d}_x }(\bs{c}_y)$ and $\forall t \in \{1,\ldots, T\}$, we have $\bs{v} \in {\cal R}_y(t)$. 
\end{lemma}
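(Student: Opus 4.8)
The plan is to show that on the event $\text{UC}^c$, any tuple $\bs{v}$ that contains the true relevant context dimensions $\bs{c}_y$ satisfies the membership test in \eqref{eqn:relset} for every round $t$. Fix $y \in {\cal Y}$, $\bs{v} \in {\cal V}_x^{\overline{d}_x}(\bs{c}_y)$, $t \in \{1,\ldots,T\}$, and take any two tuples $\bs{w},\bs{w}' \in {\cal V}_x^{2\overline{d}_x}(\bs{v})$. Since $\bs{c}_y \subseteq \bs{v} \subseteq \bs{w}$ and likewise $\bs{c}_y \subseteq \bs{w}'$, Assumption \ref{simA} implies that $\mu_y$ is constant on the set $p_{\bs{w}}(t)$ up to its variation in the $\bs{c}_y$-coordinates, which is at most $L\sqrt{\overline{d}_x}/m = \epsilon$ since each relevant dimension is partitioned into intervals of length $1/m$ and $|\bs{c}_y| \le \overline{d}_x$. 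Concretely, for the current context $x(t)$ and the center of $p_{\bs{w}}(t)$ we get $|\mu_y(x(t)) - \mu_{y,p_{\bs{w}}(t)}| \le \epsilon$ for any reference point in that cell, and analogously for $\bs{w}'$.

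Next I would invoke the definition of $\text{UC}^c$. Since $\bs{w}$ can be written as $(\bs{v},\bs{j})$ for some $\bs{j} \in {\cal V}_x(\bs{v},\overline{d}_x)$ with $\bs{v} \in {\cal V}_x^{\overline{d}_x}(\bs{c}_y)$, the event $\text{UC}^c$ guarantees that at round $t$ the value $\mu_y(x(t))$ lies in the inflated confidence interval $[L_{y,\bs{w}}(t) - \epsilon,\, U_{y,\bs{w}}(t)+\epsilon] = [\hat\mu_{y,\bs{w}}(t) - u_{y,\bs{w}}(t) - \epsilon,\, \hat\mu_{y,\bs{w}}(t) + u_{y,\bs{w}}(t) + \epsilon]$, and the same holds for $\bs{w}'$. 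Hence both $\hat\mu_{y,\bs{w}}(t)$ and $\hat\mu_{y,\bs{w}'}(t)$ are within $u_{y,\bs{w}}(t) + \epsilon$ (resp.\ $u_{y,\bs{w}'}(t) + \epsilon$) of $\mu_y(x(t))$. A triangle inequality then yields
\[
|\hat\mu_{y,\bs{w}}(t) - \hat\mu_{y,\bs{w}'}(t)| \le u_{y,\bs{w}}(t) + u_{y,\bs{w}'}(t) + 2\epsilon = 2L\sqrt{\overline{d}_x}/m + u_{y,\bs{w}}(t) + u_{y,\bs{w}'}(t),
\]
which is exactly the condition in \eqref{eqn:relset}. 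Since $\bs{w},\bs{w}'$ were arbitrary in ${\cal V}_x^{2\overline{d}_x}(\bs{v})$, we conclude $\bs{v} \in {\cal R}_y(t)$.

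The only subtlety — and the step I would be most careful about — is lining up the bookkeeping between the "real-time" quantities $\hat\mu_{y,\bs{w}}(t)$, $u_{y,\bs{w}}(t)$ used in \eqref{eqn:relset} and the "cell-clocked" quantities $\tilde\mu$, $\tilde u$, $L_{y,p_{(\bs{v},\bs{j})}}$, $U_{y,p_{(\bs{v},\bs{j})}}$ used to define $\text{UC}$. Because the reindexing $\tau_{p_{\bs{w}}}(\cdot)$ is just a time change along the subsequence of rounds in which the context falls in $p_{\bs{w}}$, the value observed at round $t$ corresponds to index $s = N_{p_{\bs{w}}(t)}(t)$ in that subsequence, and $\hat\mu_{y,\bs{w}}(t) = \tilde\mu_{y,p_{\bs{w}(t)}}(s)$, $u_{y,\bs{w}}(t) = \tilde u_{y,p_{\bs{w}}(t)}(s)$; thus the containment of $\mu_y(x(t)) = \mu_y(\tilde x_{p_{\bs{w}}(t)}(s))$ guaranteed by $\text{UC}^c$ transfers verbatim. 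Once this identification is made explicit, the rest is the two-line triangle-inequality argument above, and the claim follows for all $y$, $\bs{v} \in {\cal V}_x^{\overline{d}_x}(\bs{c}_y)$, and $t$.
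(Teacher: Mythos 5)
Your proposal is correct and follows essentially the same route as the paper's proof: since $\bs{c}_y \subseteq \bs{v} \subseteq \bs{w}$ for every $\bs{w} \in {\cal V}_x^{2\overline{d}_x}(\bs{v})$, the event $\text{UC}^c$ gives $|\hat{\mu}_{y,\bs{w}}(t) - \mu_y(x(t))| \leq \epsilon + u_{y,\bs{w}}(t)$ for each such $\bs{w}$, and the triangle inequality then yields exactly the condition defining ${\cal R}_y(t)$. Your extra care about identifying the cell-clocked quantities $\tilde\mu, \tilde u$ with the real-time quantities $\hat\mu_{y,\bs{w}}(t), u_{y,\bs{w}}(t)$ is a point the paper passes over silently, but it is the same argument.
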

\begin{proof}
	$\forall y \in {\cal Y}$, $\forall \bs{v} \in  {\cal V}_x^{\overline{d}_x }(\bs{c}_y)$ and $\forall \bs{w} \in   {\cal V}^{2\overline{d}_x}_{x} (\bs{v})$, we have $\bs{w} \supset \bs{c}_{y}$, since $\bs{w} \supset \bs{v}$. By definition of $\text{UC}$, on event $\text{UC}^c$, $\forall t \in \{1,\ldots, T\}$, we have $|\hat{\mu}_{y,\bs{w}}(t) - \mu_y(x(t))| \leq \epsilon +  u_{y, \bs{w}} (t)$. Thus, $\forall \bs{w}, \bs{w}' \in  {\cal V}^{2\overline{d}_x}_{x} (\bs{v})$, we obtain $|\hat{\mu}_{y,\bs{w}}(t) - \hat{\mu}_{y,\bs{w}'}(t) | \leq 2\epsilon +  u_{y, \bs{w}} (t) + u_{y, \bs{w}'} (t)$ and consequently, we have $\bs{v} \in {\cal R}_y(t)$ by definition of ${\cal R}_y(t)$. 
\end{proof}

The next lemma shows that the difference between estimated and expected rewards of an arm is small on event $ \text{UC}^c $.

\begin{lemma} \label{lemma:3}
	On event $\text{UC}^c$, for all $y \in {\cal Y}$ and $t \in \{1,\ldots, T\}$ we have 
	\begin{align*}
	| \hat{\mu}^{\hat{\bs{c}}_y(t)}_{y}(t)  - \mu_y (x(t)) | \leq 5 \epsilon + 5 u_{y,{\bs{w}_y(t)}}(t) . 
	\end{align*}
	
\end{lemma}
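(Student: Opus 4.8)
The plan is to work entirely on the event $\text{UC}^c$ and to tie the estimate $\hat{\mu}^{\hat{\bs{c}}_y(t)}_{y}(t)$ to one carefully chosen $2\overline{d}_x$-tuple of context dimensions for which concentration is already guaranteed. Two facts established above will be used throughout. First, by Lemma~\ref{lemma:prgoodevent}, on $\text{UC}^c$ every $\bs{v}\in{\cal V}_x^{\overline{d}_x}(\bs{c}_y)$ lies in ${\cal R}_y(t)$; in particular ${\cal R}_y(t)\neq\emptyset$, so the algorithm sets $\hat{\bs{c}}_y(t)=\argmin_{\bs{v}\in{\cal R}_y(t)}\hat{\sigma}^2_{y,\bs{v}}(t)$. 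Second, by the definition of the event $\text{UC}$, for every $\bs{w}\in{\cal V}_x^{2\overline{d}_x}$ with $\bs{w}\supseteq\bs{c}_y$ we have $|\hat{\mu}_{y,\bs{w}}(t)-\mu_y(x(t))|\le\epsilon+u_{y,\bs{w}}(t)$. Since $\bs{w}_y(t)=\argmax_{\bs{w}'\in{\cal V}_x^{2\overline{d}_x}}u_{y,\bs{w}'}(t)$, I also have $u_{y,\bs{w}}(t)\le u_{y,\bs{w}_y(t)}(t)$ for all $\bs{w}$; and degenerate rounds in which some count $N_{y,\bs{w}}(t)$ vanishes need no special handling, because then $u_{y,\bs{w}_y(t)}(t)=\infty$ and the claimed bound is vacuous, so I may proceed as if the relevant counts are positive and $\hat{\mu}^{\hat{\bs{c}}_y(t)}_{y}(t)$ is a genuine weighted average.

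\emph{Step 1 (the variation of the selected tuple is small).} Fix any $\bs{v}^\star\in{\cal V}_x^{\overline{d}_x}(\bs{c}_y)$, which exists since $|\bs{c}_y|\le\underline{d}_x\le\overline{d}_x$. By the first fact, $\bs{v}^\star\in{\cal R}_y(t)$. For any $\bs{w},\bs{w}'\in{\cal V}_x^{2\overline{d}_x}(\bs{v}^\star)$ both tuples contain $\bs{c}_y$, so the second fact and the triangle inequality give $|\hat{\mu}_{y,\bs{w}}(t)-\hat{\mu}_{y,\bs{w}'}(t)|\le 2\epsilon+u_{y,\bs{w}}(t)+u_{y,\bs{w}'}(t)\le 2\epsilon+2u_{y,\bs{w}_y(t)}(t)$. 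Maximizing over $\bs{w},\bs{w}'$ yields $\hat{\sigma}^2_{y,\bs{v}^\star}(t)\le 2\epsilon+2u_{y,\bs{w}_y(t)}(t)$, and since $\hat{\bs{c}}_y(t)$ is the $\argmin$ of $\hat{\sigma}^2_{y,\cdot}(t)$ over ${\cal R}_y(t)$, we get $\hat{\sigma}^2_{y,\hat{\bs{c}}_y(t)}(t)\le 2\epsilon+2u_{y,\bs{w}_y(t)}(t)$.

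\emph{Step 2 (a bridging tuple).} The key geometric point — and the reason the partitions are built from $2\overline{d}_x$-tuples rather than $\overline{d}_x$-tuples — is that $|\hat{\bs{c}}_y(t)\cup\bs{c}_y|\le|\hat{\bs{c}}_y(t)|+|\bs{c}_y|\le 2\overline{d}_x\le d_x$, so we may pick $\bs{w}^\star\in{\cal V}_x^{2\overline{d}_x}$ with $\hat{\bs{c}}_y(t)\cup\bs{c}_y\subseteq\bs{w}^\star$. This $\bs{w}^\star$ simultaneously refines $\hat{\bs{c}}_y(t)$ (so $\bs{w}^\star\in{\cal V}_x^{2\overline{d}_x}(\hat{\bs{c}}_y(t))$) and contains $\bs{c}_y$. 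From the latter and the second fact, $|\hat{\mu}_{y,\bs{w}^\star}(t)-\mu_y(x(t))|\le\epsilon+u_{y,\bs{w}^\star}(t)\le\epsilon+u_{y,\bs{w}_y(t)}(t)$. From the former, since $\hat{\mu}^{\hat{\bs{c}}_y(t)}_{y}(t)$ is a convex combination of $\{\hat{\mu}_{y,\bs{w}}(t):\bs{w}\in{\cal V}_x^{2\overline{d}_x}(\hat{\bs{c}}_y(t))\}$, it is within $\max_{\bs{w}\in{\cal V}_x^{2\overline{d}_x}(\hat{\bs{c}}_y(t))}|\hat{\mu}_{y,\bs{w}}(t)-\hat{\mu}_{y,\bs{w}^\star}(t)|\le\hat{\sigma}^2_{y,\hat{\bs{c}}_y(t)}(t)$ of $\hat{\mu}_{y,\bs{w}^\star}(t)$.

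\emph{Step 3 (assemble).} Combining the two displays of Step~2 with Step~1 and the triangle inequality, $|\hat{\mu}^{\hat{\bs{c}}_y(t)}_{y}(t)-\mu_y(x(t))|\le\hat{\sigma}^2_{y,\hat{\bs{c}}_y(t)}(t)+\epsilon+u_{y,\bs{w}_y(t)}(t)\le 3\epsilon+3u_{y,\bs{w}_y(t)}(t)$, which is even stronger than the claimed $5\epsilon+5u_{y,\bs{w}_y(t)}(t)$. I expect the main obstacle to be conceptual rather than computational: one must see that even though $\hat{\bs{c}}_y(t)$ need not contain $\bs{c}_y$, its presence in ${\cal R}_y(t)$ together with the $\argmin$ selection forces all $2\overline{d}_x$-cells refining it to have nearly equal sample means, while a single $2\overline{d}_x$-cell can at once refine $\hat{\bs{c}}_y(t)$ and contain $\bs{c}_y$; routing through that bridging cell is what transfers the concentration guarantee — which a priori holds only for $\bs{c}_y$-containing cells — to the estimate CMAB-RL actually uses. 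Checking that such a bridging cell exists is precisely where the standing hypothesis $2\overline{d}_x\le d_x$ is needed.
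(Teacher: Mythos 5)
Your proof is correct, and it actually establishes the sharper bound $3\epsilon+3u_{y,\bs{w}_y(t)}(t)$, which implies the stated $5\epsilon+5u_{y,\bs{w}_y(t)}(t)$. The route differs from the paper's in how the bridging $2\overline{d}_x$-tuple is exploited. The paper fixes $\bs{v}\in{\cal V}_x^{\overline{d}_x}(\bs{c}_y)$, first shows $|\hat{\mu}^{\bs{v}}_y(t)-\mu_y(x(t))|\le\epsilon+u$, then compares the two \emph{aggregated} estimators $\hat{\mu}^{\bs{v}}_y(t)$ and $\hat{\mu}^{\hat{\bs{c}}_y(t)}_y(t)$ by chaining through the tuple $\bs{g}(\bs{v},\hat{\bs{c}}_y(t))$: each of the two hops (cells refining $\bs{v}$ to $\bs{g}$, and $\bs{g}$ to cells refining $\hat{\bs{c}}_y(t)$) costs $2\epsilon$ plus two uncertainty terms, giving $4\epsilon+4u$ for the comparison and $5\epsilon+5u$ in total; notably, the paper only uses that $\hat{\bs{c}}_y(t)\in{\cal R}_y(t)$ and never invokes the concentration of the bridging cell itself. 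You instead observe that the single bridging cell $\bs{w}^\star\supseteq\hat{\bs{c}}_y(t)\cup\bs{c}_y$ does double duty: it lies in ${\cal V}_x^{2\overline{d}_x}(\hat{\bs{c}}_y(t))$, so the convex combination $\hat{\mu}^{\hat{\bs{c}}_y(t)}_y(t)$ is within the variation $\hat{\sigma}^2_{y,\hat{\bs{c}}_y(t)}(t)\le 2\epsilon+2u$ of $\hat{\mu}_{y,\bs{w}^\star}(t)$, and it contains $\bs{c}_y$, so $\hat{\mu}_{y,\bs{w}^\star}(t)$ itself concentrates within $\epsilon+u$ of $\mu_y(x(t))$. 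This removes two triangle-inequality hops and yields $3\epsilon+3u$; your handling of the degenerate zero-count case and of the existence of $\bs{w}^\star$ (where $2\overline{d}_x\le d_x$ is needed) is also sound. The only consequence worth noting is that the constant $5$ is hard-wired into the algorithm's UCB index and into Lemma~\ref{lemma:4}, so the lemma must be stated with $5$ for consistency with the rest of the analysis, but your argument shows the exploration bonus could in principle be reduced.
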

\begin{proof}
	Fix $\bs{v} \in {\cal V}_x^{\overline{d}_x}(\bs{c}_y)$.
	Since $\bs{c}_y \subseteq \bs{v} $, we have on event $\text{UC}^c$
	\begin{align*}
		 \hat{\mu}^{\bs{v}}_{y}(t) &= \frac{\sum\limits_{\bs{w}' \in {\cal V}_x^{ 2\overline{d}_x } (\bs{v})} \hat{\mu}_{y,\bs{w}'}(t) N_{y,\bs{w}'}(t) }{\sum\limits_{\bs{w}' \in {\cal V}_x^{ 2\overline{d}_x } (\bs{v})} N_{y,\bs{w}'}(t) }\\ &\leq \notag 
		 \frac{\sum\limits_{\bs{w}' \in {\cal V}_x^{ 2\overline{d}_x } (\bs{v})} ( \mu_y (x(t))  + \epsilon +  u_{y, \bs{w}_y(t)}(t))  N_{y,\bs{w}'}(t) }{\sum\limits_{\bs{w}' \in {\cal V}_x^{ 2\overline{d}_x } (\bs{v})} N_{y,\bs{w}'}(t) } \notag \\
		 &= \mu_y (x(t))  + \epsilon +  u_{y,\bs{w}_y(t)}(t) . \notag
	\end{align*}
	Similarly, we also have
	\begin{align*}
		 \hat{\mu}^{\bs{v}}_{y}(t) &\geq 
		   \frac{\sum\limits_{\bs{w}' \in {\cal V}_x^{ 2\overline{d}_x } (\bs{v})} ( \mu_y (x(t))  - \epsilon -  u_{y, \bs{w}_y(t)}(t))  N_{y,\bs{w}'}(t) }{\sum\limits_{\bs{w}' \in {\cal V}_x^{ 2\overline{d}_x } (\bs{v})} N_{y,\bs{w}'}(t) } \notag \\
		 &= \mu_y (x(t))  - \epsilon -  u_{y,\bs{w}_y(t)}(t) . \notag
	\end{align*}
	Combining these two yields
	\begin{align}
	| \hat{\mu}^{\bs{v}}_{y}(t)   - \mu_y (x(t) ) | \leq \epsilon +  u_{y, \bs{w}_y(t)}(t). \label{eq1}
	\end{align}
	
Next, consider $\hat{\bs{c}}_y(t)$, which is chosen from ${\cal R}_y(t)$ as the $\overline{d}_x$-tuple of context dimensions with the minimum variation. We have for all $\bs{j}, \bs{k} \in {\cal V}_x ({\hat{\bs{c}}_y(t)}  , \overline{d}_x)$
\begin{align*}
	| \hat{\mu}_{y,(\hat{\bs{c}}_y(t), \bs{k})}(t) &- \hat{\mu}_{y,(\hat{\bs{c}}_y(t), \bs{j})}(t)| \leq
	\\ & 2 \epsilon + u_{y,(\hat{\bs{c}}_y(t), \bs{k})}(t) + u_{y,(\hat{\bs{c}}_y(t), \bs{j})}(t) .
\end{align*}
Also, on event $\text{UC}^c$, we have for all $\bs{l} \in {\cal V}_x(\bs{v},\overline{d}_x)$
\begin{align*}
    | \hat{\mu}_{y,(\bs{v}, \bs{l})}(t) -  \mu_y (x(t))  | \leq \epsilon + u_{y,(\bs{v}, \bs{l})}(t) .
\end{align*}
Thus, on event $\text{UC}^c$, we obtain for all $\bs{l}, \bs{n} \in {\cal V}_x(\bs{v},\overline{d}_x)$
\begin{align*}
    |\hat{\mu}_{y,(\bs{v}, \bs{l})}(t) - \hat{\mu}_{y,(\bs{v}, \bs{n})}(t)| &\leq 2 \epsilon +  u_{y,(\bs{v}, \bs{l})}(t) \notag +  u_{y,(\bs{v}, \bs{n})}(t) .
\end{align*}
Let $\bs{g}(\bs{v}, \hat{\bs{c}}_y(t))$ be a $2\overline{d}_x$-tuple of context dimensions that includes all entries of $\bs{v}$ and $\hat{\bs{c}}_y(t)$, i.e.,
for all $i \in \bs{v}$ and $j \in \hat{\bs{c}}_y(t) $, we have $i,j \in \bs{g}(\bs{v}, \hat{\bs{c}}_y(t))$. The existence of at least one such $2\overline{d}_x$-tuple of context dimensions is guaranteed since $\bs{v}$ and $\hat{\bs{c}}_y(t)$ are both $\overline{d}_x$-tuples of context dimensions. Combining what we have obtained thus far, we get
\begin{align*}
    |\hat{\mu}^{\bs{v}}_{y}(t) - \hat{\mu}^{\hat{\bs{c}}_y(t)}_{y}(t) |& \notag \\
    \leq \max\limits_{\substack{\bs{k} \in {\cal V}_x(\bs{v}, \overline{d}_x) \\ \bs{j} \in {\cal V}_x(\hat{\bs{c}}_y(t) , \overline{d}_x)}} &\Big\{ |\hat{\mu}_{y,(\bs{v}, \bs{k})}(t)
    - \hat{\mu}_{y,(\hat{\bs{c}}_y(t), \bs{j})}(t) | \Big\}  \\
    \leq \max\limits_{\substack{\bs{k} \in {\cal V}_x(\bs{v}, \overline{d}_x) \\ \bs{j} \in {\cal V}_x(\hat{\bs{c}}_y(t) , \overline{d}_x)}} &\Big\{ |\hat{\mu}_{y,(\bs{v}, \bs{k})}(t) - \hat{\mu}_{y, \bs{g}(\bs{v}, \hat{\bs{c}}_y(t))}(t) | \\ & + | \hat{\mu}_{y,\bs{g}(\bs{v}, \hat{\bs{c}}_y(t))}(t) - \hat{\mu}_{y,(\hat{\bs{c}}_y(t), \bs{j})}(t)| \Big\} \\
    \leq \max\limits_{\substack{\bs{k} \in {\cal V}_x(\bs{v}, \overline{d}_x) \\ \bs{j} \in {\cal V}_x(\hat{\bs{c}}_y(t) , \overline{d}_x)}} &\Big\{  4\epsilon + u_{y,(\bs{v}, \bs{k})}(t) \\ & + 
    u_{y,(\hat{\bs{c}}_y(t)), \bs{j})}(t) + 2 u_{y,\bs{g}(\bs{v}, \hat{\bs{c}}_y(t))}(t) \Big\}\\ 
    \leq 4 \epsilon + 4 u_{y,{\bs{w}_y(t)}}(t) .
\end{align*}

Finally, combining the result above with \eqref{eq1}, we obtain
	\begin{align*}
	| \hat{\mu}^{\hat{\bs{c}}_y(t)}_{y}(t)  - \mu_y (x(t)) | \leq 5 \epsilon + 5 u_{y,{\bs{w}_y(t)}}(t) . 
	\end{align*}
\end{proof}

To prove the next lemma, we introduce new notation. For $y \in {\cal Y}$, $\bs{w} \in {\cal V}_x^{2\overline{d}_x}$ and $p_{\bs{w}} \in {\cal P}_{\bs{w}}$, let 
\begin{align*}
{\cal T}_{y, \bs{w}, {p}_{\bs{w}}} & \coloneqq \{ t \in \{1,\ldots,T\} : x(t) \in {p}_{\bs{w}}, \ y(t) = y, \\
& \qquad \bs{w}_y(t) = \bs{w}  \} 
\end{align*}
and $\tau_{y, \bs{w}, {p}_{\bs{w}}}(t)$ denote the round in which a context arrives to ${p}_{\bs{w}}$, arm $y$ is chosen and $\bs{w}_y(t) = \bs{w} $ for the $t$th time. For simplicity, with an abuse of notation we let ${\cal T}_{y, {p}_{\bs{w}}} \coloneqq {\cal T}_{y, \bs{w}, {p}_{\bs{w}}}  $ and $\tau_{y, {p}_{\bs{w}}}(t) \coloneqq \tau_{y, \bs{w}, {p}_{\bs{w}}}(t)$.

\begin{lemma} \label{lemma:4}
	On event $\text{UC}^c$, for all $y \in {\cal Y}$, $\bs{w} \in {\cal V}_x^{2\overline{d}_x}$, $p_{\bs{w}} \in {\cal P}_{\bs{w}}$ and for all $t \in \{1,\ldots, |{\cal T}_{y,{p}_{\bs{w}}}| \}$, we have
	\begin{align*}
	\mu_{y^*(\typwt)}(x(\tau_{y,{p}_{\bs{w}}}(t))) &-  \mu_{y}(x(\tau_{y,{p}_{\bs{w}}}(t))) \\ & \leq 10 \epsilon + 10 u_{y,{\bs{w}}}(\typwt)
	\end{align*}
	where $y^*(t) \in \argmax_{y' \in {\cal Y}} \mu_{y'} ( x(t))$.
\end{lemma}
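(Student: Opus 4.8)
The plan is to run the standard ``optimism in the face of uncertainty'' argument, using Lemma~\ref{lemma:3} to control the index $\text{UCB}_y(t)$ from both sides and then exploiting the greedy selection rule $y(t)=\argmax_{y\in{\cal Y}}\text{UCB}_y(t)$ to pass these bounds on to the suboptimality gap. The key point is that Lemma~\ref{lemma:3} already does all the hard work (it is where the relevance-learning machinery of Lemmas~\ref{lemma:prUC}--\ref{lemma:prgoodevent} is consumed), so the remaining argument is essentially bookkeeping.

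Concretely, I would first observe that on $\text{UC}^c$, for every $y\in{\cal Y}$ and $t$, Lemma~\ref{lemma:3} gives $\mu_y(x(t)) - 5\epsilon - 5u_{y,\bs{w}_y(t)}(t) \le \hat{\mu}^{\hat{\bs{c}}_y(t)}_{y}(t) \le \mu_y(x(t)) + 5\epsilon + 5u_{y,\bs{w}_y(t)}(t)$. Adding $5u_{y,\bs{w}_y(t)}(t)$ and recalling that $\text{UCB}_y(t) = \hat{\mu}^{\hat{\bs{c}}_y(t)}_{y}(t) + 5u_{y,\bs{w}_y(t)}(t)$, this yields the two one-sided bounds $\text{UCB}_y(t) \ge \mu_y(x(t)) - 5\epsilon$ and $\text{UCB}_y(t) \le \mu_y(x(t)) + 5\epsilon + 10u_{y,\bs{w}_y(t)}(t)$. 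Since $y^*(t)\in{\cal Y}$ and $y(t)$ maximizes the index over ${\cal Y}$, we have $\text{UCB}_{y(t)}(t) \ge \text{UCB}_{y^*(t)}(t)$; chaining the lower bound for $y^*(t)$, this greedy inequality, and the upper bound for $y(t)$ gives $\mu_{y^*(t)}(x(t)) - 5\epsilon \le \text{UCB}_{y^*(t)}(t) \le \text{UCB}_{y(t)}(t) \le \mu_{y(t)}(x(t)) + 5\epsilon + 10u_{y(t),\bs{w}_{y(t)}(t)}(t)$, hence $\mu_{y^*(t)}(x(t)) - \mu_{y(t)}(x(t)) \le 10\epsilon + 10u_{y(t),\bs{w}_{y(t)}(t)}(t)$ for every round $t$ that is actually played.

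Finally I would specialize to $t = \typwt$. By the definition of ${\cal T}_{y,{p}_{\bs{w}}}$ and $\tau_{y,{p}_{\bs{w}}}(\cdot)$, such a round satisfies $y(\typwt) = y$ and $\bs{w}_y(\typwt) = \bs{w}$, so $u_{y(\typwt),\bs{w}_{y(\typwt)}(\typwt)}(\typwt) = u_{y,\bs{w}}(\typwt)$; substituting into the previous display gives exactly the claimed inequality. The only care required is this index bookkeeping --- verifying that on the subsequence $\typwt$ the uncertainty term indexed by $\bs{w}_y(\cdot)$ collapses onto the one indexed by the fixed $\bs{w}$, and that $y^*(t)$ indeed lies in ${\cal Y}$ (which it does, being defined as an $\argmax$ over ${\cal Y}$) so that the greedy inequality is legitimate. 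There is no deeper obstacle here.
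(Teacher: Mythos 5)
Your proposal is correct and follows essentially the same route as the paper: apply Lemma~\ref{lemma:3} to sandwich the index of each arm, use the greedy selection rule to compare the index of the chosen arm with that of $y^*$, and then note that on the subsequence $\typwt$ one has $y(\typwt)=y$ and $\bs{w}_y(\typwt)=\bs{w}$ so the uncertainty term collapses to $u_{y,\bs{w}}(\typwt)$. The paper merely packages the same chain of inequalities through auxiliary quantities $U'_{y'}$ and $L'_{y'}$ (the index shifted by $\pm 5\epsilon$), which is a cosmetic difference.
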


\begin{proof}
	Since CMAB-RL chooses arm $y$ in round $\tau_{y,{p}_{\bs{w}}}(t)$, we have $y \in \argmax_{y' \in {\cal Y}} \{ \hat{\mu}^{\hat{\bs{c}}_{y'}(\typwt)}_{y'}(\typwt) + 5 u_{y', \bs{w}_{y'}(\typwt)}(\typwt) \}$. By Lemma 3, we have
	\begin{align*}
	| \hat{\mu}^{\hat{\bs{c}}_y(\typwt)}_{y}(\typwt) &- \mu_y (x(\typwt)) | \\ & \leq 5 \epsilon + 5 u_{y,\bs{w}_{y}(\typwt)}(\typwt) .
	\end{align*}
	For all $y' \in {\cal Y}$, let
	\begin{align*}
	U'_{y'}(t) &\coloneqq  \hat{\mu}^{\hat{\bs{c}}_{y'}(t)}_{y'}(t) + 5 u_{y',\bs{w}_{y'}(t)}(t)  + 5 \epsilon \text{ and } \\
	L'_{y'}(t) &\coloneqq  \hat{\mu}^{\hat{\bs{c}}_{y'}(t)}_{y'}(t) - 5 u_{y',\bs{w}_{y'}(t)}(t)  - 5 \epsilon . 
	\end{align*}
	Note that by the selection rule of CMAB-RL, $U'_{y}(\typwt) \geq U'_{y^*(\tau_{y,{p}_{\bs{w}}}(t))}(\typwt)$.
	Combining this with the result of Lemma \ref{lemma:3} we obtain $U'_{y}(\typwt) \geq U'_{y^*(\tau_{y,{p}_{\bs{w}}}(t))}(\typwt) \geq \mu_{y^*(\typwt)}(x(\typwt)) \geq \mu_{y}(x(\typwt)) \geq L'_{y}(\typwt)$. 
	Therefore, we get $\mu_{y^*(\typwt)}(x(\tau_{y,{p}_{\bs{w}}}(t))) -  \mu_{y}(x(\tau_{y,{p}_{\bs{w}}}(t)))  \leq U'_y(\typwt) - L'_{y}(\typwt) = 10 \epsilon + 10 u_{y, \bs{w}_{y}(\typwt)}(\typwt)$.
	 Finally, note that in round $ \typwt $ it holds that $\bs{w}_y(\typwt) = \bs{w} $, hence we also have $ u_{y, \bs{w}_{y}(\typwt)}(\typwt) = u_{y, \bs{w}}(\typwt) $. Using this information we get the inequality stated in the lemma.
\end{proof}

For each $y \in {\cal Y}$, there are $|{\cal V}_x^{2\overline{d}_x}| = \genfrac(){0pt}{1}{d_x}{2\overline{d}_x}$ different $2\overline{d}_x$-tuples of context dimensions and for each $2\overline{d}_x$-tuple of context dimensions $\bs{w} \in {\cal V}_x^{2\overline{d}_x}$, $|{{\cal P}}_{\bs{w}}| = m^{2\overline{d}_x}$. Thus, we have
\begin{align*}
&\sum_{t=1}^{T} \mu_{y^*(x(t))}(x(t))- \sum_{t=1}^{T} \mu_{y(t)}(x(t)) \notag \\ &\leq C_{\max} |{\cal V}_x^{2\overline{d}_x}|m^{2\overline{d}_x} |{\cal Y}|  \\& 
+  \sum_{y \in {\cal Y}} \sum_{\bs{w} \in {\cal V}_x^{2\overline{d}_x}} \sum_{{p}_{\bs{w}} \in {{\cal P}}_{\bs{w}}} \sum_{t \in \{1,\ldots, |{\cal T}_{y,{p}_{\bs{w}}}| \}}  10u_{y,{\bs{w}}}(\typwt) + 10 \epsilon \\
&=  C_{\text{max}} |{\cal V}_x^{2\overline{d}_x}| m^{2\overline{d}_x} |{\cal Y}| + 10 \epsilon T  \\&  +  \sum_{y \in {\cal Y}} \sum_{\bs{w} \in {\cal V}_x^{2\overline{d}_x}} \sum_{{p}_{\bs{w}} \in {{\cal P}}_{\bs{w}}} \sum_{t \in \{1,\ldots, |{\cal T}_{y,{p}_{\bs{w}}}| \}} 10u_{y,{\bs{w}}}(\typwt)  \\
& \leq C_{\text{max}} |{\cal V}_x^{2\overline{d}_x}| m^{2\overline{d}_x} |{\cal Y}| + 10 \epsilon T  \\&  + B_{m,T} \sum_{y \in {\cal Y}} \sum_{\bs{w} \in {\cal V}_x^{2\overline{d}_x}} \sum_{{p}_{\bs{w}} \in {{\cal P}}_{\bs{w}}} \sum_{l=0}^{ |{\cal T}_{y,{p}_{\bs{w}}}| - 1 } \sqrt{ \frac{1}{1 +l} } \notag \\
& \leq C_{\text{max}} |{\cal V}_x^{2\overline{d}_x}| m^{2\overline{d}_x} |{\cal Y}| + 10 \epsilon T  \\&  + 2 B_{m,T} \sum_{y \in {\cal Y}} \sum_{\bs{w} \in {\cal V}_x^{2\overline{d}_x}} \sum_{{p}_{\bs{w}} \in {{\cal P}}_{\bs{w}}} \sqrt{|{\cal T}_{y,{p}_{\bs{w}}}|} \notag \\
& \leq C_{\text{max}} |{\cal V}_x^{2\overline{d}_x}| m^{2\overline{d}_x} |{\cal Y}| +  10 \epsilon T + 2 B_{m,T} \sqrt{  |{\cal V}_x^{2\overline{d}_x}| m^{2\overline{d}_x} |{\cal Y}| T   } 
\end{align*}
where $B_{m,T} \coloneqq 10\sqrt{2A_{m,T}}$ and $A_{m,T} \coloneqq (1 + 2 \log 
(2  |{\cal Y}| \bar{C}  m^{2\overline{d}_x} T^{3/2} ))$.

In order to bound the regret, next, we evaluate the error due to discretization of the arm set. Recall that instead of choosing arms from ${\cal A}$, CMAB-RL chooses arms from ${\cal Y}$ such that $|{\cal Y}| = m^{\overline{d}_a} \genfrac(){0pt}{1}{d_a}{\overline{d}_a}$. The regret due to this discretization can be bounded as 
\begin{align*}
\sum_{t=1}^{T} \mu_{a^*(x(t))}(x(t)) - \sum_{t=1}^{T} \mu_{y^*(x(t))}(x(t)) \leq T L \sqrt{\overline{d}_a}/m.
\end{align*}
Combining this with the regret bound obtained above and recalling that $\epsilon = L \left( \sqrt{\overline{d}_x}/m \right) $, we get
\begin{align*}
\text{Reg}(T) &\leq C_{\text{max}} |{\cal V}_x^{2\overline{d}_x}| m^{2\overline{d}_x} m^{\overline{d}_a} \genfrac(){0pt}{1}{d_a}{\overline{d}_a} + \frac{10L T}{m} \sqrt{\overline{d}_x}  \\& + \frac{L T \sqrt{\overline{d}_a}}{m} + 2 B_{m,T} \sqrt{  |{\cal V}_x^{2\overline{d}_x}| m^{2\overline{d}_x} m^{\overline{d}_a}\genfrac(){0pt}{1}{d_a}{\overline{d}_a} T   } 
\end{align*}
with probability $1- 1/T$. Finally, after choosing $m = \lceil T^{1/(2+2\overline{d}_x + \overline{d}_a)} \rceil$ the regret bound becomes
\begin{align*}
\text{Reg}(T) &\leq  C_{\text{max}} |{\cal V}_x^{2\overline{d}_x}|  \tilde{T}^{\frac{2\overline{d}_x + \overline{d}_a}{2+2\overline{d}_x + \overline{d}_a}} \genfrac(){0pt}{1}{d_a}{\overline{d}_a}  \\& + L (10\sqrt{\overline{d}_x}+\sqrt{\overline{d}_a}) \tilde{T}^{\frac{1+2\overline{d}_x+ \overline{d}_a }{2+2\overline{d}_x+\overline{d}_a}}& \notag \\
& + 2 B_{m,T} \sqrt{   |{\cal V}_x^{2\overline{d}_x}| \genfrac(){0pt}{1}{d_a}{\overline{d}_a}   } \tilde{T}^{\frac{1+2\overline{d}_x+\overline{d}_a}{2+2\overline{d}_x+\overline{d}_a}}&
\end{align*}
which proves Theorem \ref{theorem:instreg1d}.

\section{Illustrative Results} \label{simulations}
In this section, we numerically evaluate the performance of CMAB-RL in two experiments. In the first experiment, we generate a synthetic simulation environment with multi-dimensional context and arm sets, where in each set only a single dimension is relevant. In the second experiment, we apply CMAB-RL to the dynamic drug dosage regulation problem (bolus insulin administration) by utilizing OhioT1DM dataset \cite{Marling2018TheOD}.

\subsection{Competitor Learning Algorithms}\label{sec:competitors}    
    
 \subsubsection{Instance-based Uniform Partitioning (IUP) \cite{tekin2016confidence}} This is a contextual MAB algorithm that learns the optimal arm for each context by uniformly partitioning the set of feasible context-arm pairs $ {\cal F} $ into $ m^{d_x + d_a} $ hypercubes, where the choice $ m = \lceil T^{1/(2+d_x+d_a)} \rceil $ is shown minimize the regret. In each round, IUP first identifies the set of hypercubes that contain the current context, and then, plays an arm within the hypercube with the highest UCB among all hypercubes in that set. IUP does not take the relevance information into account. 
 
 \subsubsection{Contextual Hierarchical Optimistic Optimization (C-HOO)}
 This is the contextual version of hierarchical optimistic optimization (HOO) strategy proposed in \cite{bubeck2011x}.\footnote{Another related work \cite{shekhar2018gaussian} also proposes a contextual version of HOO for the Bayesian version of the MAB problem with Gaussian process prior.} Originally, HOO adaptively partitions the arm set $ {\cal A} $, by the help of a binary tree structure it stores. Each node of the tree corresponds to a subset of $ {\cal A} $, and as the depth level of a node increases, the subset it represents gets smaller. Subsets that correspond to nodes that have the same depth level form a partition on $ {\cal A} $. The tree of partitions is constructed in a way such that the union of the regions covered by the children of a node $ n $ is equal to the region that node $ n $ covers. In each round, HOO constructs a path starting from the root node, which corresponds to $ {\cal A} $. The path is constructed such that at every level of the tree, the child node with the highest UCB is added to the path. When a node with at most one child is reached, if the node has one child, the second child is created. Otherwise, a random child is created. The arm to be played is selected from the region that the newly created child represents. As HOO gathers information about the environment, it ``zooms" into regions with potentially high expected rewards, thereby performing more careful exploration in these regions.
 
We create C-HOO based on HOO as follows. First of all, we construct a tree of partitions over $ {\cal F} $ instead of $ {\cal A } $. In each round, C-HOO first observes the context, and then, constructs its path similar to HOO. The difference is that when constructing the path, at every level of the tree, first the availability (whether a node contains the context) of the children are checked, and among the children that contain the current context, the one with the highest UCB is added to the path.
It is also important to note that since the computational complexity of HOO increases quadratically with the number of rounds, we  construct C-HOO based on the truncated version of HOO \cite{bubeck2011x}, which is more efficient and enjoys the same regret bound as HOO except an additive factor of $ 4 \sqrt{T} $.

\subsubsection{Uniform Random} This benchmark randomly selects an arm in each round without taking the current context or past information into account. 

\subsection{Parameters Used in the Experiments}

We assume that the Lipschitz constants in both experiments are unknown to the learner, thus simply set $L = 1$ in the learning algorithms. Moreover, the set of all feasible context-arm pairs $ {\cal F} $, time horizon $ T $, dimensionality of context and arm sets, i.e., $ d_x $ and $ d_a $, are given as inputs to all learning algorithms. In addition, we set $ \overline{d}_x = \underline{d}_x$ and $\overline{d}_a = \underline{d}_a $ for CMAB-RL, and $ v_1 = 2\sqrt{d_x + d_a} $ and $ \rho=2^{(-1/(d_x + d_a))} $ for C-HOO (consistent with Assumption A1 in \cite{bubeck2011x}). For IUP, no additional parameters are required.
The confidence terms of all learning algorithms are scaled (multiplied) with a constant that is chosen from the set $ \{0.001, 0.005, 0.01, 0.05, 0.1, 0.25, 0.5, 1\} $ which pushes algorithms to exploit more. The rationale behind this choice is that during our experiments we observed that the confidence terms start large and vanish slowly forcing learning algorithms to explore too much, and scaling helps learning algorithms achieve higher cumulative rewards. For each learning algorithm, the optimal multiplier for the confidence term is found by grid search.  For all experiments, in order to reduce the effect of randomness due to context arrivals, arm selections and reward generation on the performance measurements, the reported results correspond to the average of 20 independent repetitions.

\subsection{Experiments on  a Synthetic Simulation Environment}
 	We consider a setting with $ d_x = 5 $,  $ d_a = 5 $, $ \underline{d}_x = 1 $ and $ \underline{d}_a = 1 $, and assume that the relevant context dimension is the same for all arms. We let the relevant arm and context dimensions to be the first arm and context dimensions respectively, i.e., $\bs{c}= \{1 \}$ and $\bs{c}_a = \{1\}$, $\forall a \in {\cal A}$. Since the expected reward function does not depend on the irrelevant context dimensions, we have $ \underline{d}_x + \underline{d}_a = 2 $. The expected reward function is defined by using a multivariate Gaussian mixture model, where the expected reward for context-arm pair $ (x, a) \in {\cal F} $ is given as
 	\begin{align*}
 		\mu_a(x) &= \min \left\{s\sum_{i = 1}^{K} \rho_i f( (x_1,a_1) | \theta_i, \Sigma_i), 1 \right\} 
 	\end{align*}
 	for $\sum_{i = 1}^{K} \rho_i = 1$ and $\rho_i > 0$, for $1 \leq i \leq K$. Here, $ s $ denotes the scaling factor, $ K $ denotes the number of components, $ f $ denotes the probability density function of a multivariate Gaussian distribution and $ \rho_i, \theta_i \text{ and } \Sigma_i $  stand for the component weight, mean vector and covariance matrix of the $i$th component, respectively. The parameters of the Gaussian mixture are set as follows:
 	$ s = 0.25 $,  $ K = 2 $, $ \rho_1 = \rho_2 = 0.5 $,  $\theta_1 = [0.25, 0.75]^T$, $\theta_2 = [0.5, 0.5]^T$ and
 	\begin{align*} 
     	\Sigma_1 = \begin{bmatrix}
    		 		0.05 & 0.03 \\
    		 		0.03 & 0.025 
     				\end{bmatrix}
    	&\text{,      }
    	\Sigma_2 = \begin{bmatrix}
    				 0.025 & -0.03 \\
    				 -0.03 & 0.05 
    				 \end{bmatrix}.
 	\end{align*}
 	Variation of the expected reward function over the relevant context and arm dimensions can be seen in Fig. \ref{fig:mixture}. The reward that the learner receives in round $ t $ is sampled from a Bernoulli distribution with parameter $\mu_{a(t)}(x(t))$ independently from the other rounds.

\begin{figure}[!t]
\centering
          \includegraphics[width=1.0\linewidth]{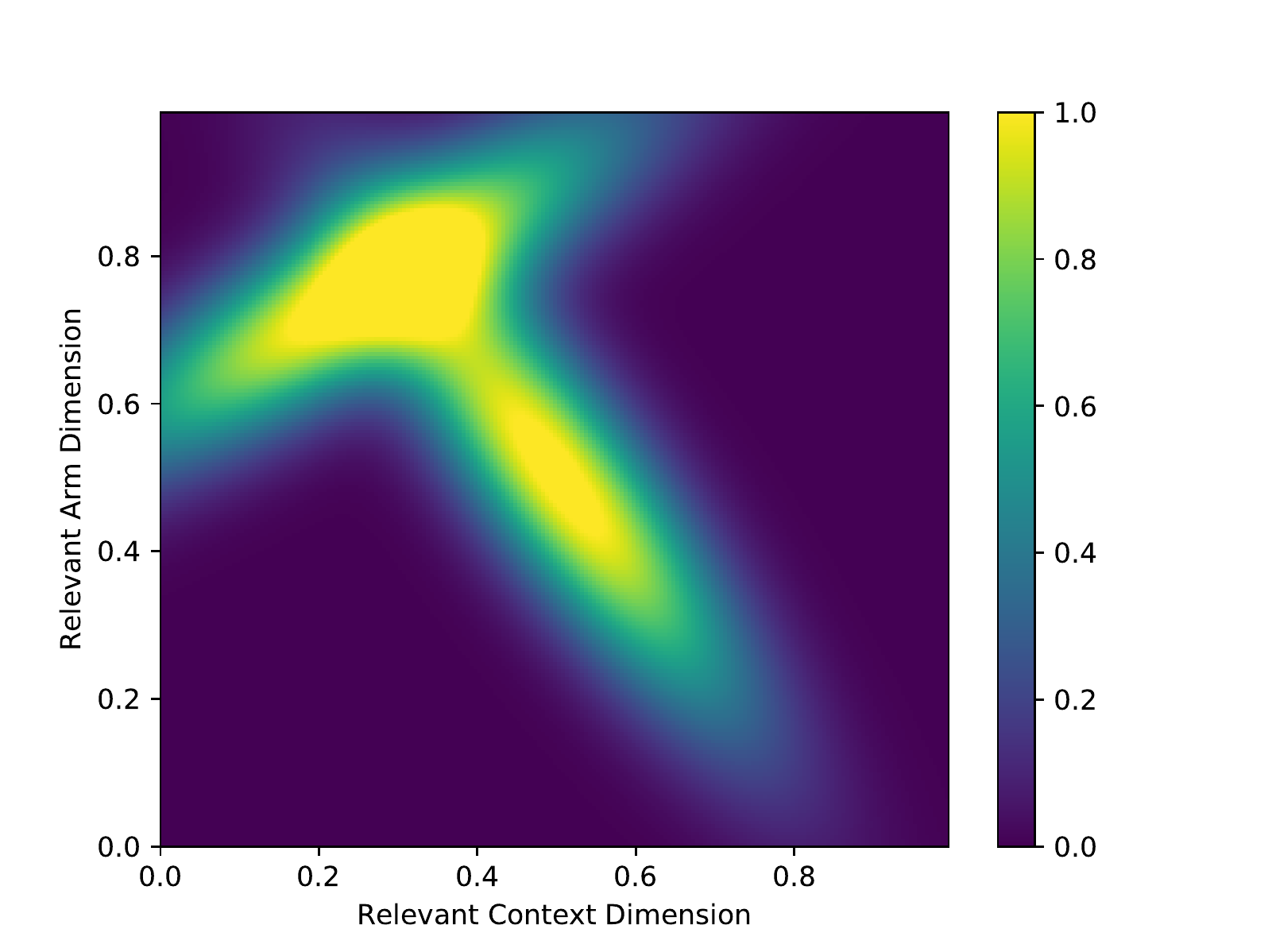}
          \caption{The expected reward as a function of the relevant context and arm dimensions in the first experiment.}
          \label{fig:mixture}
\end{figure}

\begin{figure}[!t]
\centering
        \includegraphics[width=1.0\linewidth]{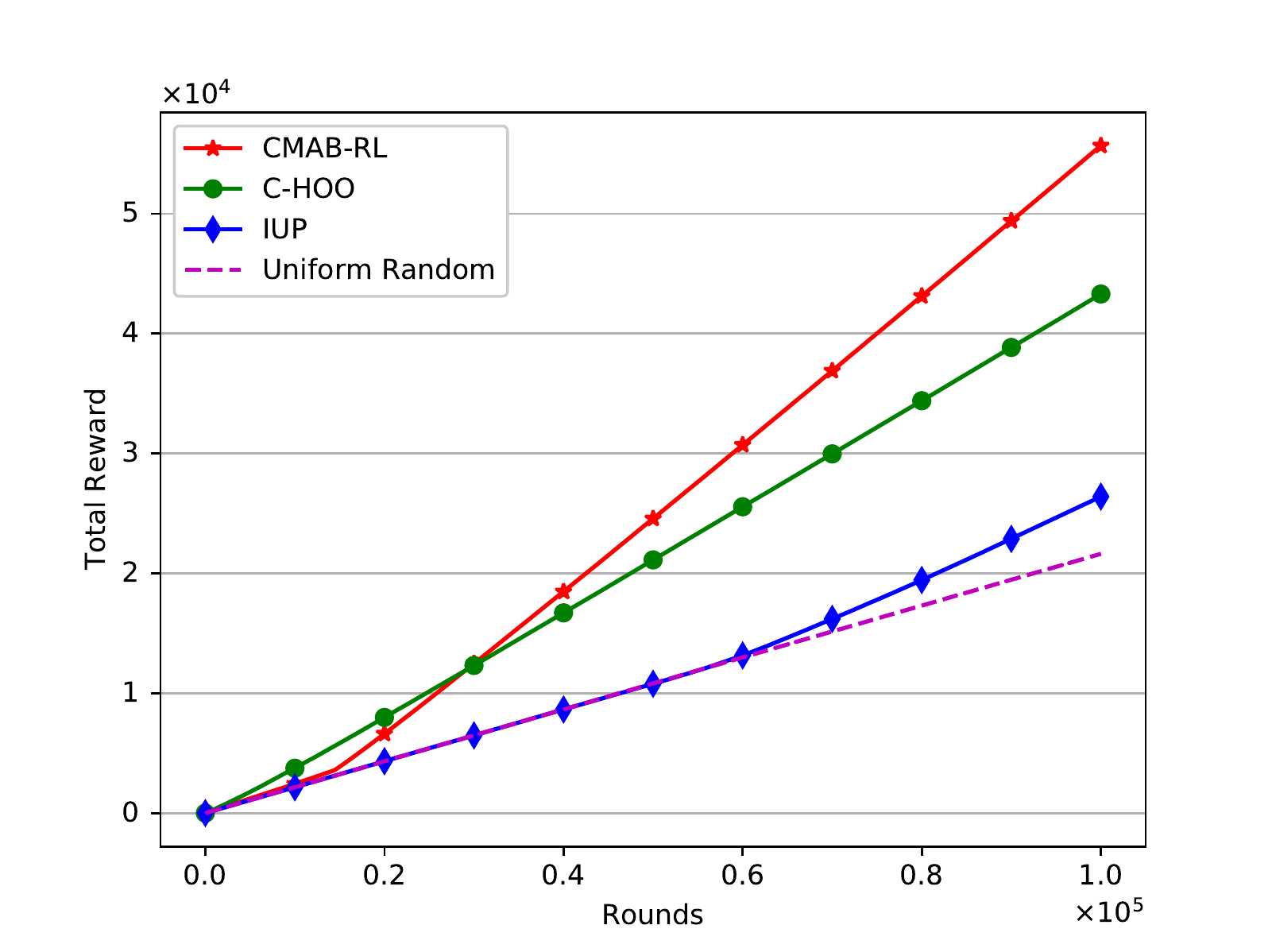}
        \caption{Cumulative rewards of CMAB-RL, C-HOO and IUP for $T=10^5$ in the first experiment.}
        \label{fig:gmm_result}
\end{figure}
 	
   	Learning algorithms are run for a time horizon of $ T = 10^5 $ rounds. In each round, a context arrives uniformly at random. The optimal multipliers for the confidence terms are found to be $0.001$ for CMAB-RL, $0.01$ for IUP and $0.05$ for C-HOO. Reported results correspond to this choice of multipliers. Cumulative rewards of the algorithms over time are given in Fig. \ref{fig:gmm_result}. As we can see, CMAB-RL achieves more than $29\%$ and $100\% $ improvement over the cumulative rewards of C-HOO and IUP respectively. Although C-HOO does not utilize relevancy information, it significantly outperforms IUP as a result of employing adaptive exploration using a tree of partitions. On the other hand, IUP performs poorly due to the curse of dimensionality. As a result, its cumulative reward is only slightly higher than that of Uniform Random. 
   	
   	Results on the regret are given in Fig. \ref{fig:gmm_regret}. The increase in the regret of CMAB-RL significantly drops down after $15000$ rounds, while the increase in the regrets of C-HOO and IUP does not drop significantly in the given time horizon. Since $T$ is an input to the learning algorithms, we provide additional results on the regret when the algorithms are run with input time horizons ranging from $T=5000$ to $T=10^5$. Fig. \ref{fig:sublinearity_regret} shows that CMAB-RL achieves the smallest regret for all time horizons. 
   
\begin{figure}[!t]
\centering
      \includegraphics[width=1.0\linewidth]{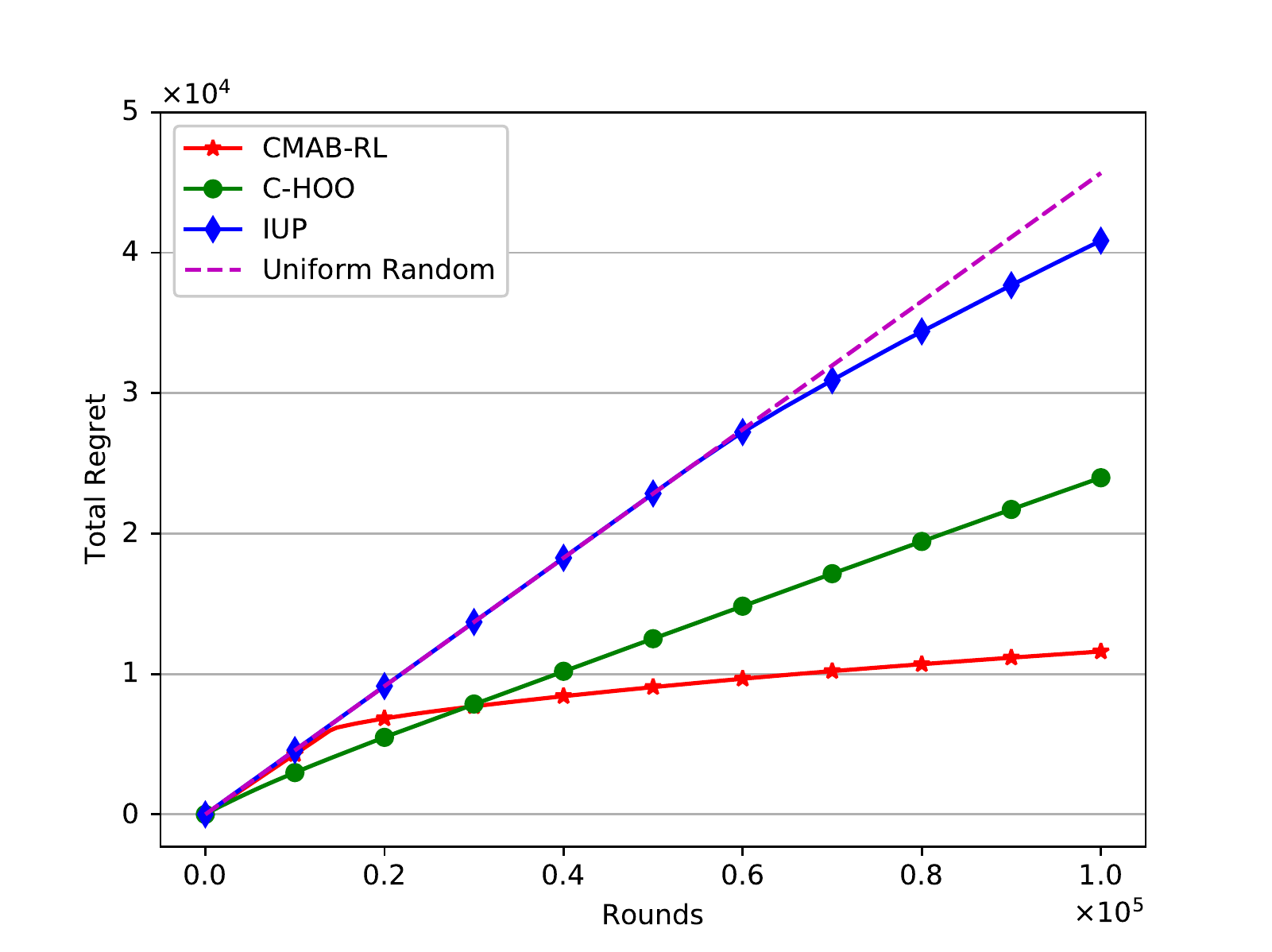}
      \caption{Regrets of CMAB-RL, C-HOO and IUP for $T=10^5$ in the first experiment.}
      \label{fig:gmm_regret}
\end{figure}

\begin{figure}[!t]
\centering
      \includegraphics[width=1.0\linewidth]{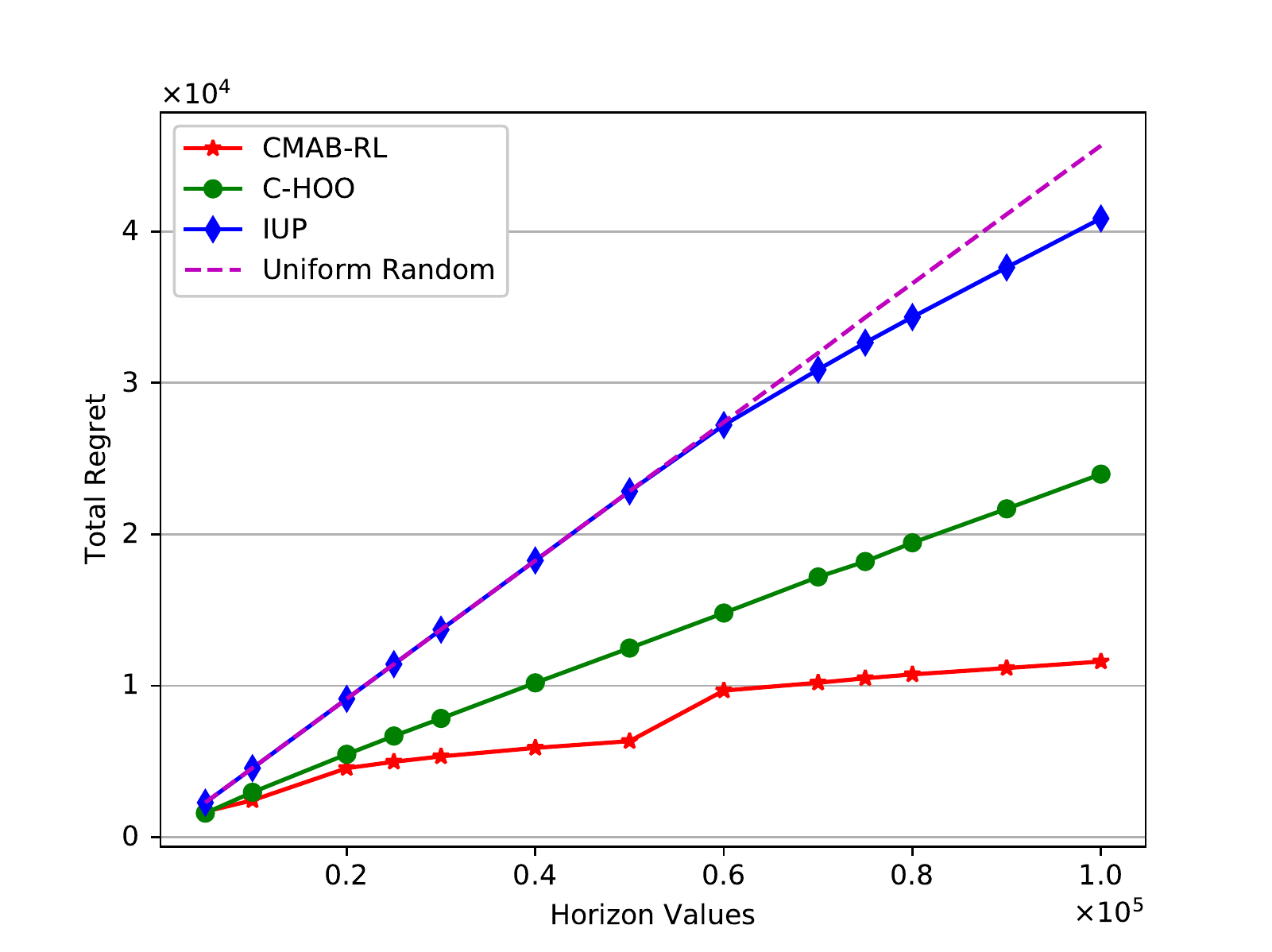}
      \caption{Regrets of CMAB-RL, C-HOO and IUP when they are run with different time horizons in the first experiment. The jumps in the regrets correspond to time horizons for which the value of $ m $ changes (since $m$ takes integer values).}
      \label{fig:sublinearity_regret}
\end{figure}

 	\begin{table*}[h!]
 		\centering
 		\caption{Percentages of samples for all approaches and patients}
 		\label{hist_table}
 		\small
 		\renewcommand{\arraystretch}{1.2}
 		\begin{tabular}{cc|c|c|c|c|c|c|c|}
 			
 			\cline{3-9}
 			&  & Patient 559 & Patient 563 & Patient 570 & Patient 575 & Patient 588 & Patient 591 & Overall \\ \hline
 			\multicolumn{1}{|c|}{\multirow{4}{*}{\begin{tabular}[c]{@{}c@{}}\textless{}80\\ mg/dL\end{tabular}}}
 			& CMAB-RL    & 00.25 & 00.04 & 00.13 & 00.47 & 00.24 & 00.71 & 00.27 \\ \cline{2-9} 
 			\multicolumn{1}{|c|}{} & C-HOO   & 00.30 & 00.05 & 00.14 & 00.58 & 00.32 & 00.89 & 00.34 \\ \cline{2-9} 
 			\multicolumn{1}{|c|}{} & IUP     & 00.35 & 00.05 & 00.18 & 00.69 & 00.38 & 00.96 & 00.38 \\ \cline{2-9} 
 			\multicolumn{1}{|c|}{} & Dataset & 01.97 & 00.25 & 01.59 & 04.50 & 00.00 & 03.41 & 01.75 \\ \hline
 			\multicolumn{1}{|c|}{\multirow{4}{*}{\begin{tabular}[c]{@{}c@{}}80-180\\ mg/dL\end{tabular}}} 
 			& CMAB-RL    & 66.92 & 70.49 & 66.64 & 88.65 & 69.06 & 79.31 & 72.78 \\ \cline{2-9} 
 			\multicolumn{1}{|c|}{} & C-HOO   & 53.57 & 62.54 & 54.98 & 81.54 & 56.73 & 67.27 & 62.30 \\ \cline{2-9} 
 			\multicolumn{1}{|c|}{} & IUP     & 50.10 & 56.08 & 51.76 & 77.11 & 52.39 & 64.25 & 57.99 \\ \cline{2-9} 
 			\multicolumn{1}{|c|}{} & Dataset & 36.84 & 56.78 & 37.93 & 62.00 & 39.81 & 57.95 & 49.06 \\ \hline
 			\multicolumn{1}{|c|}{\multirow{4}{*}{\begin{tabular}[c]{@{}c@{}}\textgreater{}180\\ mg/dL\end{tabular}}} 
 			& CMAB-RL    & 32.83 & 29.47 & 33.23 & 10.88 & 30.70 & 19.98 & 26.95 \\ \cline{2-9} 
 			\multicolumn{1}{|c|}{} & C-HOO   & 46.12 & 37.41 & 44.88 & 17.87 & 42.95 & 31.84 & 37.36 \\ \cline{2-9} 
 			\multicolumn{1}{|c|}{} & IUP     & 49.56 & 43.87 & 48.06 & 22.20 & 47.23 & 34.79 & 41.63 \\ \cline{2-9} 
 			\multicolumn{1}{|c|}{} & Dataset & 61.18 & 42.96 & 60.48 & 33.50 & 60.19 & 38.64 & 49.19 \\ \hline
 		\end{tabular}
 	\end{table*}

 \subsection{Experiments on the OhioT1DM dataset}
     For our second experiment, we use the OhioT1DM dataset that consists of several physiological measurements for 6 T1DM patients who are on continuous glucose monitoring and insulin pump therapy over a time period of 8 weeks (see \cite{Marling2018TheOD} for the details). While the original dataset is split into training and test sets for each patient in advance, we merge them into a single set to perform online learning.
 
     Our aim in this experiment is to learn the optimal bolus insulin dose for a patient such that their mean blood glucose levels remain within the desired range of $80$ to $180$ mg/dL (see, e.g., \cite{ada_guide}) by making use of contextual information such as the state of the patient and the ongoing basal insulin treatment before a bolus injection.
     As the state of the patient, we consider means of (i) continuous glucose measurements (CGMs), (ii) heart rate, (iii) skin temperature, (iv) air temperature and (v) galvanic skin response measurement, and sums of (i) carbohydrate intake from meals, (ii) exercise scores (multiplication of the duration and the intensity of an exercise session) and (iii) number of steps taken for the last $30$ minutes before a bolus injection. As the ongoing basal insulin treatment, we consider the mean of the basal insulin dosages for the last $30$ minutes. This corresponds to the setting where $ d_x = 9 $. As the arms, we only consider the bolus insulin dosages, thus $ d_a = \underline{d}_a = 1 $. Since bolus insulin doses are administered by an insulin pump that provides doses of insulin with a fine granularity, the set of bolus insulin doses can be approximated well by a continuum of values. Note that data is scaled such that it resides in range $ [0, 1] $ for all context and arm dimensions. 
     
     The rewards are based on the mean of the CGMs of the patients for the next 30 minutes to 2 hours after a bolus injection. Thus, for the sake of simplicity, in the rest of this section, we call CGM values that we use as contexts as {\em past} CGMs and CGM values that we use for reward generation as {\em resulting} CGMs. 
     
	We impute the missing values as follows. If no data is available to generate the contexts, then we set the contexts for carbohydrate intake, exercise and number of steps as zero, since lack of data suggests no activity. For heart rate, skin temperature, air temperature and galvanic skin response, we take the mean value of the whole dataset. Data is always available for bolus injections as we first locate the bolus events and extract other variables near the bolus events. If however, no data is available for past or resulting CGMs of a bolus event, then we ignore that bolus event.

    In order to setup the simulation, for each patient we fit a multivariate Gaussian distribution to all context dimensions, using only the said patient's data. Moreover, we learn a prior distribution over the patients by considering how frequently they appear in the dataset. We also need to model every possible combination of contexts, arms and rewards, which means that we need to learn a mapping from the context-arm space to the reward space. To achieve this, we use a Gradient Boosting regression model with Huber loss, which has $100$ decision trees as weak estimators where each tree is constrained to have a maximum depth of $5$. The inputs to the regression model are contexts and arms, whereas the outputs are the resulting CGMs. We use oversampling so that all patients have equal amount of data prior to the training of Gradient Boosting. The oversampling is done by sampling with replacement. During the experiment, in each round $t$, we select a patient randomly using the prior distribution, then we sample the context vector $x(t)$ from the selected patient's Gaussian distribution. If the generated context is not in range $[0,1]^{(dx + da)}$, we repeat the sampling process until a valid context is generated. Then, we feed the generated context to the CMAB algorithm. When the CMAB algorithm returns the arm $a(t)$, we query the regression model for the reward $ r(t) $, inputting $x(t)$ and $a(t)$. Upon receiving the query, the environment generates a resulting CGM value, and translates it into $ r(t) $ using the following mapping:
        \begin{align}
            f(x) = \begin{cases} 
                  0,                 & x   \leq 80 \ \ \text{(hypoglycemia)}\\
                  \frac{x - 80}{10}, & 80  \leq x \leq 90 \\
                  1,                 & 90  \leq x \leq 130 \\
                  \frac{180-x}{50},  & 130 \leq x \leq 180 \\
                  0,                 & 180 \leq x \ \ \text{(hyperglycemia)}
            \end{cases}
            \label{t1dm_rew_fn_eqn}
        \end{align}
    
    Note that we add zero-mean Gaussian noise with standard deviation of 5 to the resulting CGMs to introduce randomness to the rewards.

        \begin{figure}[h!]
    	\centering
    	\includegraphics[width=0.9\linewidth]{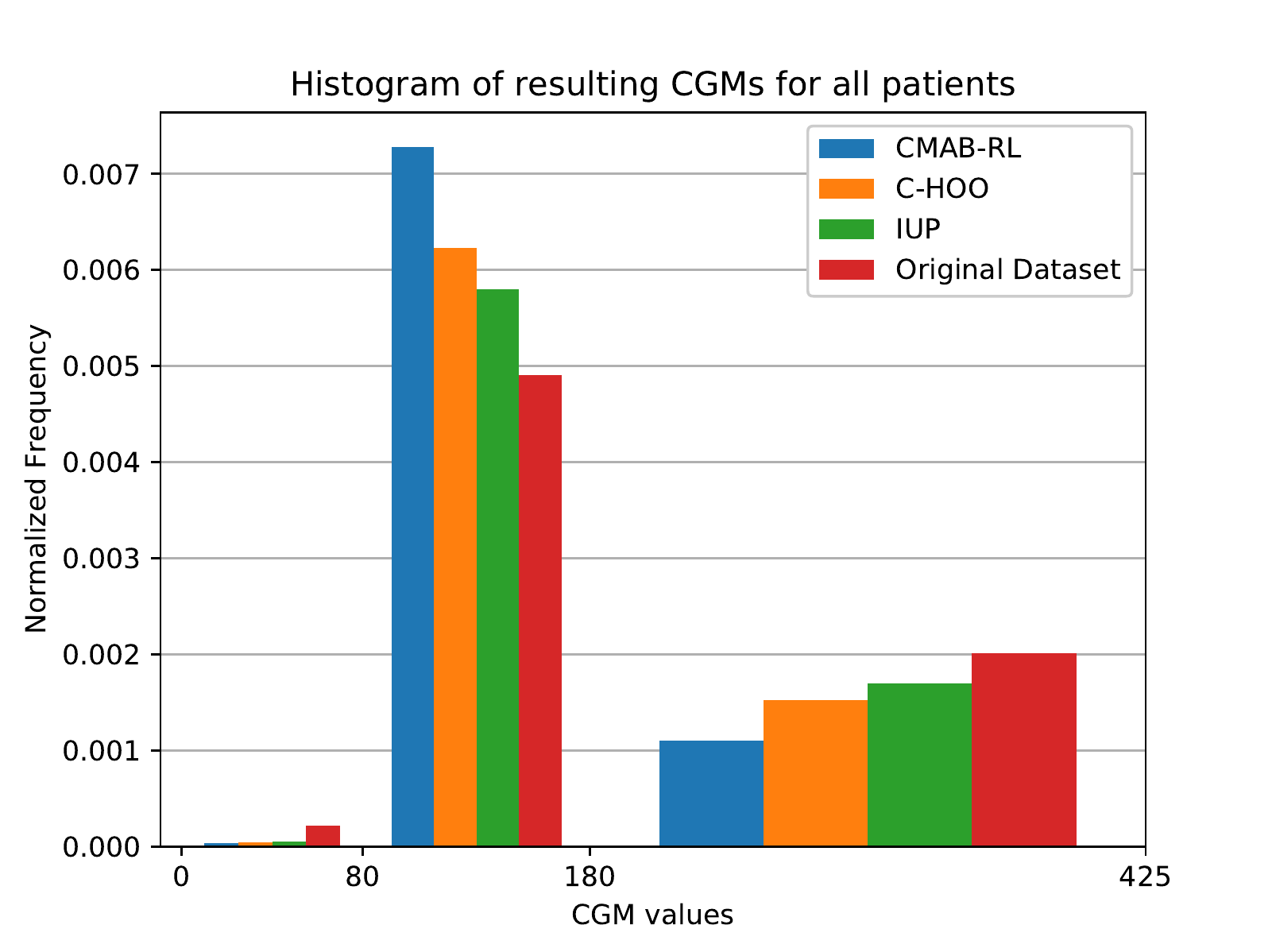}
    	\caption{Histograms of the resulting CGMs for all patients under different learning algorithms and the original dataset.}
    	\label{fig:result_hist}
    \end{figure}
    
   After training the Gradient Boosting regression model, we examine average impurity decrease for each input across all trees which are then normalized so that the sum of the average impurities for all inputs add up to 1. This examination shows that only the past CGM values before a bolus event yields a score higher than 0.5, while all the other variables yield scores lower than 0.1. This result is consistent with other works that study this dataset in the setting of forecasting, including \cite{zhu18deept1dm} and \cite{midroni18xgboostt1dm}. Therefore, it can be argued that past CGM values are the most relevant in the set of available features. In light of this information, we set $ \underline{d}_x = 1 $ during the experiment and fix the horizon to be $ T = 10^5 $. The confidence term multipliers in this experiment are $0.001$ for CMAB-RL, $0.05$ for IUP and $0.1$ for C-HOO.

    The histograms of resulting CGMs of all learning algorithms and the original dataset are given in Fig. \ref{fig:result_hist}. These are normalized such that the area under individual histograms sum up to $1$, so that the difference between the glucose control in the original dataset and that of the learning algorithms can be observed better. It is observed that in general all learning algorithms provide better glucose management than the one in the original dataset. In addition, Table \ref{hist_table}, represents the percentage of samples for which the resulting CGMs represent hypoglycemia or hyperglycemia, or are in the desired range. It is seen that for each patient, CMAB-RL has the highest percentage of samples between the desired range of $80$ to $180$ mg/dL. Moreover, CMAB-RL also has the lowest density in the regions that correspond to hypoglycemia and hyperglycemia, except for patient 588, for which the original dataset has no hypoglycemic CGMs.

\section{Conclusion}\label{conclusion}
In this work, we considered a CMAB problem with high-dimensional context and arm sets, and motivated by real-world applications, assumed that the reward only depends on a few relevant dimensions of the context and the arm sets. For this problem, we proposed an online learning algorithm, called CMAB-RL, which learns the relevant context and arm dimensions simultaneously, thereby achieving a regret bound that only depends on the maximum number of relevant dimensions given that this number is known by the learner. Our regret analysis does not require any stochastic assumptions on the context arrivals, and CMAB-RL is shown to beat other contextual MAB algorithms that do not exploit the relevance in both synthetic and real-world datasets.

\bibliographystyle{IEEEtran}
\bibliography{whole}

\newpage

\appendices

\section{Tables of Notation}

\begin{table}[h!]
	\caption{Notation related to problem formulation}
	\label{table:notation1}
	\small
	\centering
	\begin{tabular}{p{1cm} p{7cm} }
		\toprule
		Notation & Description \\
		\midrule
		$|{\cal S}|$ & Cardinality of a set ${\cal S}$ \\
		$\wp ({\cal S})$ & The power set of set ${\cal S}$ \\ 
		${\cal X}$   & Context set \\
		$d_x$      & Number of context dimensions \\	
		${\cal D}_x$ & Set of context dimensions: ${\cal D}_x \coloneqq \{1, \ldots, d_x\}$ \\
		${\cal V}^l_x$ & $\{ \bs{v} \in \wp( {\cal D}_x ): |\bs{v}| = l \}$ \\
		$ {\cal V}^{l}_{x} (\bs{v}) $ & $\{ \bs{w} \in {\cal V}^l_x : \bs{v} \subseteq \bs{w} \}$ for $\bs{v} \subseteq {\cal D}_x$  \\
		${\cal X}_{\bs{z}}$ & Subset of ${\cal X}$ that contains the values of context dimensions in $\bs{z} \subseteq {\cal D}_x$ \\
		$x_{\bs{z}}$ & $|\bs{z}|$-tuple subcontext whose elements are elements of $x$ that correspond to the context dimensions in $\bs{z}$ \\ 
		$x(t)$   & Context in round $t$ \\
		${\cal A}$ & Arm set  \\
		$d_a$ & Number of arm dimensions \\
		${\cal D}_a$ & Set of arm dimensions: ${\cal D}_a \coloneqq \{1, \ldots, d_a\}$ \\
		${\cal V}^l_a$ & $\{ \bs{v} \in \wp( {\cal D}_a ): |\bs{v}| = l \}$ \\
		${\cal A}_{\bs{z}}$ & Subset of ${\cal A}$ that contains the values of arm dimensions in $\bs{z} \subseteq {\cal D}_a$ \\
		$a_{\bs{z}}$ & $|\bs{z}|$-tuple subarm whose elements are elements of $a$ that correspond to the arm dimensions in $\bs{z}$ \\ 
	    $a^*(x)$ & Optimal arm for context $x$ \\
		$a(t)$ & Arm chosen by the learner in round $t$  \\
		$\mu_a(x)$  & Expected reward of arm $a$ for context $x$ \\
		$r(t)$  & Reward obtained in round $t$ \\
		$\kappa(t)$  & Noise in the reward in round $t$  \\
		$\bs{c}$ & Subset of relevant arm dimensions: $\bs{c} \subseteq {\cal D}_a$ \\
		$\underline{d}_a$ & Number of relevant arm dimensions \\	
	    $\overline{d}_a$ & Known upper bound on the number of relevant arm dimensions \\		
		$\bs{c}_a$ & Subset of relevant context dimensions for arm $a$: $\bs{c}_a \subseteq {\cal D}_x$ \\
		$\underline{d}_x$ & Maximum number of relevant context dimensions \\
		$\overline{d}_x$ & Known upper bound on the maximum number of relevant context dimensions \\
		$\text{Reg}(t) $ & Cumulative regret until round $t$
	\end{tabular}
	\end{table}		

\begin{table}[h!]
	\caption{Notation related to regret analysis}
	\label{table:notation3}
	\small
	\centering
	\begin{tabular}{p{1cm} p{7cm} }
		\toprule
		Notation & Description \\
		\midrule
		$\text{UC} $ & Unconfident event \\
		${\cal V}_x (\bs{v},d')$ & Set of $d'$-tuples of context dimensions whose elements are from the set ${\cal D}_x \setminus \bs{v}$ for any  $\bs{v} \in {\cal V}_x^{\overline{d}_x }$ and $d' \leq d_x - \overline{d}_x$, $d' \in \mathbb{Z}^+$ \\
		$\tau_{{p}_{\bs{w}}}(t)$ & Round in which a context arrives to ${p}_{\bs{w}}$ for the $t$th time \\
		$\tau_{y, {p}_{\bs{w}}}(t)$ & Round in which a context arrives to ${p}_{\bs{w}}$, arm $y$ is selected and $\bs{w} = \bs{w}_y(t) $ for the $t$th time \\
		$\tilde{x}_{{p}_{\bs{w}}}(t)$ & $x(\tau_{{p}_{\bs{w}}}(t))$ \\
		$\tilde{N}_{y,{p}_{\bs{w}}}(t)$ &  $N_{y,{p}_{\bs{w}}}(\tau_{{p}_{\bs{w}}}(t))$\\
		$\tilde{\mu}_{y,{p}_{\bs{w}}}(t)$ & $\hat{\mu}_{y,{p}_{\bs{w}}}(\tau_{{p}_{\bs{w}}}(t))$  \\
		$\tilde{u}_{y,{p}_{\bs{w}}}(t)$ & $u_{y,{p}_{\bs{w}}}(\tau_{{p}_{\bs{w}}}(t))$ \\
		$\tilde{y}_{{p}_{\bs{w}}}(t)$ & Arm chosen by the learner in round $\tau_{y, {p}_{\bs{w}}}(t)$\\
		$y^*(t)$ & Optimal arm in set ${\cal Y}$ for the context in round $t$ \\ 
		\bottomrule	
	\end{tabular}
\end{table}
		
	\begin{table}[h!]
			\caption{Notation related to CMAB-RL}
			\label{table:notation2}
			\small
			\centering
			\begin{tabular}{p{1cm} p{7cm} }
				\toprule
				Notation & Description \\
				\midrule		
		$m$ & Partition number \\ 
		${\cal P}_i $ & Uniform partition of the subcontext in dimension $i$: ${\cal P}_i := \{[0, \frac{1}{m}], ( \frac{1}{m}, \frac{2}{m}], \ldots, (\frac{m-1}{m},1] \}$ \\
		${\cal P}_{\bs{w}} $& $\prod_{i \in \bs{w}} {\cal P}_i $ \\
		${\cal I}_i $ & Uniform partition of the subarm in dimension $i$: ${\cal I}_i := \{[0, \frac{1}{m}], ( \frac{1}{m}, \frac{2}{m}], \ldots, (\frac{m-1}{m},1] \}$ \\
		${\cal I}_{\bs{v}} $& ${\cal I}_{\bs{v}} := \prod_{i \in \bs{v}} {\cal I}_i $ \\
		${\cal C}({\cal A})$ & $\cup_{ \bs{v} \in {\cal V}^{\overline{d}_a}_{a}} {\cal I}_{\bs{v}}$  \\
		$y$ & Index of the geometric centers of the elements of ${\cal C}({\cal A})$ \\		 
		${\cal Y}$ & Set of arms that correspond to geometric centers of the elements of  ${\cal C}({\cal A})$ \\
		$p_{\bs{w}}(t) $ & Element of ${\cal P}_{\bs{w}}$ that $x_{\bs{w}}(t)$ belongs to for $\bs{w} \in{\cal V}^{2 \overline{d}_x}_{x}$\\
		$N_{y, \bs{w}}(t)$ & Counter that counts the number of times context was in $p_{\bs{w}}$ and arm $y$ was selected before round $t$ \\
		$\hat{\mu}_{y, \bs{w}}(t)$ & Sample mean of the rewards that is obtained from rounds prior to round $t$ in which context was in $p_{\bs{w}}$ and arm $y$ was selected  \\
		$u_{y, \bs{w}} (t)$ & Uncertainty term for arm $y \in {\cal Y} $ and $\bs{w} \in{\cal V}^{2 \overline{d}_x}_{x}$ in round $t$ \\
		${\cal R}_y(t) $ & Set of candidate relevant tuples of context dimensions for $y \in {\cal Y}$ in round $t$ \\
		$\hat{\sigma}^2_{y, \bs{v}} $& Variance estimate of $\bs{v} \in {\cal R}_y(t)$ and $y \in {\cal Y}$ in round $t$ \\
		$\hat{\bs{c}}_y $ & Tuple of estimated relevant context dimensions for arm $y \in {\cal Y}$ \\
		$\hat{\mu}^{\bs{v}}_{y}(t)$ & Sample mean reward of arm $y \in {\cal Y}$ for the tuple of contex dimensions $\bs{v} \in {\cal V}_x^{\overline{d}_x} $ in round $t$\\
	\end{tabular}
\end{table}

\newpage

\section{Complexity of CMAB-RL}
\label{app:complexity} 

CMAB-RL stores sample mean reward estimates and counters for all $ y \in {\cal Y} $, $\bs{w} \in {\cal V}^{2\overline{d}_x}_x$ and $p_{\bs{w}} \in {\cal P}_{\bs{w}}$. Thus, the memory requirement is $ O\left(\genfrac(){0pt}{1}{d_a}{\overline{d}_a} \genfrac(){0pt}{1}{d_x}{2\overline{d}_x} m^{2\overline{d}_x + \overline{d}_a}\right) $. For $m = \lceil T^{1/(2+2\overline{d}_x + \overline{d}_a)} \rceil$ as given in Theorem 1, the memory complexity in time becomes $O( T^{(2\overline{d}_x + \overline{d}_a)/(2+2\overline{d}_x + \overline{d}_a)} )$, which is sublinear. Note that CMAB-RL can be implemented in a more efficient way by only creating and storing sample mean reward estimates and counters for sets in the partition to which a context had arrived in the past. 

Next, we investigate the computational complexity of CMAB-RL during run-time. In round $t$, finding $p_{\bs{w}}(t) \in {\cal P}_{\bs{w}}$ for all $\bs{w} \in {\cal V}^{2\overline{d}_x}_x$, requires $O \left(d_x + \genfrac(){0pt}{1}{d_x}{2\overline{d}_x}\right) $ computations. Construction of set $ {\cal R}_y(t) $ for all $ y \in {\cal Y} $, calculation of $\hat{\sigma}^2_{y, \bs{v}} (t)$ for all $ y \in {\cal Y} $, $\bs{v} \in {\cal R}_y(t)$ and determination of $\hat{\bs{c}}_y (t)$ for all $ y \in {\cal Y} $ all together require $ O\left(\genfrac(){0pt}{1}{d_a}{\overline{d}_a} m^{\overline{d}_a}  \genfrac(){0pt}{1}{d_x}{\overline{d}_x} {\genfrac(){0pt}{1}{d_x - \overline{d}_x}{\overline{d}_x}}^{2} \right)$ operations. 
In order to estimate $\hat{\mu}^{\hat{\bs{c}}_y(t)}_{y}(t)$ for all $ y \in {\cal Y} $, we need $O\left(\genfrac(){0pt}{1}{d_a}{\overline{d}_a} m^{\overline{d}_a} {\genfrac(){0pt}{1}{d_x - \overline{d}_x}{\overline{d}_x}} \right)$ operations. Determination of $\bs{w}_y(t)$ for all $ y \in {\cal Y} $ requires $ O\left(\genfrac(){0pt}{1}{d_a}{\overline{d}_a} m^{\overline{d}_a}  \genfrac(){0pt}{1}{d_x}{2\overline{d}_x} \right)$ comparisons. Finally, determining $y(t)$ requires $ O\left(\genfrac(){0pt}{1}{d_a}{\overline{d}_a} m^{\overline{d}_a} \right)$ comparisons. Hence, the overall computational complexity of CMAB-RL in round $t$ is 
\begin{align*}
O\left(\genfrac(){0pt}{1}{d_a}{\overline{d}_a} m^{\overline{d}_a}  \genfrac(){0pt}{1}{d_x}{\overline{d}_x} {\genfrac(){0pt}{1}{d_x - \overline{d}_x}{\overline{d}_x}}^{2} \right) .
\end{align*}
For $m = \lceil T^{1/(2+2\overline{d}_x + \overline{d}_a)} \rceil$ as given in Theorem 1, per-round computational complexity in time becomes $O( T^{\overline{d}_a/(2+2\overline{d}_x + \overline{d}_a)} )$, which is sublinear. 

\section{Regret Analysis of Exp4 for a Special Case}
\label{app:exp4} 

In this section, we assume that the set of relevant context dimensions is the same for each arm and derive a regret bound for Exp4 [46] under this assumption. We define the experts of Exp4 in the following way. Recall the Generate procedure from Section IV. For each $\bs{v} \in  {\cal V}^{\overline{d}_x}_x$, we have $|{\cal Y}| =\genfrac(){0pt}{1}{d_a}{\overline{d}_a}  m^{\overline{d}_a}$ arms. Expert $(\bs{v},i)$ assumes that the tuple of relevant context dimensions is $\bs{v}$. It partitions ${\cal X}_{\bs{v}}$ into $m^{\overline{d}_x}$ sets denoted by ${\cal P}_{\bs{v}}$. Then, it assigns one action in ${\cal Y}$ to each set in ${\cal P}_{\bs{v}}$. The number of different experts that can be defined for $\bs{v}$ is
\begin{align*}
|{\cal Y}|^{|{\cal P}_{\bs{v}}|} = \left( {d_a \choose \overline{d}_a} m^{\overline{d}_a} \right)^{ m^{\overline{d}_x}  } . 
\end{align*}
Thus, the total number of experts is
\begin{align*}
N = {d_x \choose \overline{d}_x}  |{\cal Y}|^{|{\cal P}_{\bs{v}}|} = {d_x \choose \overline{d}_x}  \left( {d_a \choose \overline{d}_a} m^{\overline{d}_a} \right)^{ m^{\overline{d}_x}  } . 
\end{align*}
The regret of the best expert in the pool of experts defined above with respect to the optimal arm is proportional to $T L (\sqrt{\overline{d}_a}/m +\sqrt{\overline{d}_x}/m )$ due to discretization. If we use Exp4, then its regret with respect to the best expert in the pool of experts defined above is 
\begin{align*}
&O \left( \sqrt{T |{\cal Y}| \log N} \right)  \\
&=  O \left( \sqrt{T {d_a \choose \overline{d}_a} m^{\overline{d}_a} \left(\log {d_x \choose \overline{d}_x}  + m^{\overline{d}_x} \log \left( {d_a \choose \overline{d}_a} m^{\overline{d}_a} \right)  \right) } \right)  \\
&= O \left( \sqrt{T {d_a \choose \overline{d}_a} m^{\overline{d}_a + \overline{d}_x}   \log  \left( {d_a \choose \overline{d}_a} m^{\overline{d}_a} \right) } \right) . 
\end{align*}
To balance the regret due to discretization and the regret due to Exp4 we set $m = \lceil T^{1/(\overline{d}_x + \overline{d}_a +2)} \rceil$, which results in total regret $\tilde{O}(T^{1 - 1/ (\overline{d}_x + \overline{d}_a +2)})$.

\section{Concentration Inequality [28]}
\label{app:concentration} 

Consider an arm $y$, a tuple $\bs{w}$, and a set $p_{\bs{w}}$ in the partition ${\cal P}_{\bs{w}}$ for which the rewards are generated by a process $\{ {R}_{y,{p}_{\bs{w}}}(t) \}_{t=1}^{ N_{y,{p}_{\bs{w}}}(T) }$ with $ {\mu}_{y,{p}_{\bs{w}}} = \mr{E} [{R}_{y,{p}_{\bs{w}}}(t)]$, where the noise ${R}_{y,{p}_{\bs{w}}}(t) - {\mu}_{y,{p}_{\bs{w}}}$ is conditionally 1-sub-Gaussian. Let $N_{y,{p}_{\bs{w}}}(T)\geq 1$ denote the number of times $y$ is selected by the end of time $T$. 
Let $\hat{\mu}_{y,{p}_{\bs{w}}}(T) = \sum_{t=1}^T \mr{I} (y(t) =y ) {R}_{y,{p}_{\bs{w}}}(t) / N_{y,{p}_{\bs{w}}}(T)$.
For any $\delta > 0$ with probability at least $1-\delta$ we have
\begin{align*}
& \big| \hat{\mu}_{y,{p}_{\bs{w}}}(T)  - {\mu}_{y,{p}_{\bs{w}}} \big|
\\ &\leq \sqrt{  \frac{2}{ N_{y,{p}_{\bs{w}}}(T) } 
	\left(1 + 2 \log \left(  \frac{ (1 + N_{y,{p}_{\bs{w}}}(T)  )^{1/2} } {\delta}    \right)  
	\right)  }  ~~ \forall T \in \mathbb{N}.   \notag
\end{align*}

\end{document}